\documentclass[letterpaper]{article} 
\usepackage{aaai24}  
\usepackage{times}  
\usepackage{helvet}  
\usepackage{courier}  
\usepackage[hyphens]{url}  
\usepackage{graphicx} 
\urlstyle{rm} 
\usepackage{natbib}  
\usepackage{caption} 
\frenchspacing  
\setlength{\pdfpagewidth}{8.5in}  
\setlength{\pdfpageheight}{11in}  

%
\usepackage{algorithm}
\usepackage{multirow}
\usepackage{graphicx}
\usepackage[table,xcdraw]{xcolor}
\usepackage{algorithmic}
\usepackage{color,xcolor}
\usepackage{amssymb}
\usepackage{bbding}
\usepackage{threeparttable}
\usepackage{booktabs}
\usepackage{multirow}
\usepackage{graphicx}
\usepackage{subfigure}
\usepackage{multicol}
\usepackage{lipsum}
\usepackage{multirow}
\usepackage{graphicx}
\usepackage[table,xcdraw]{xcolor}




\usepackage{amsmath}
\usepackage{amssymb}
\usepackage{mathtools}
\usepackage{amsthm}
\theoremstyle{plain}
\newtheorem{theorem}{Theorem}
\newtheorem{observe}{Observation}
\usepackage{xargs}                      

%

\usepackage{newfloat}
\usepackage{listings}
\DeclareCaptionStyle{ruled}{labelfont=normalfont,labelsep=colon,strut=off} 
\lstset{%
	basicstyle={\footnotesize\ttfamily},
	numbers=left,numberstyle=\footnotesize,xleftmargin=2em,
	aboveskip=0pt,belowskip=0pt,%
	showstringspaces=false,tabsize=2,breaklines=true}
\floatstyle{ruled}
\newfloat{listing}{tb}{lst}{}
\floatname{listing}{Listing}
%
\pdfinfo{
/TemplateVersion (2024.1)
}

\newcommand{\RomanNumeralCaps}[1]{\MakeUppercase{\romannumeral #1}}

\setcounter{secnumdepth}{0} 

%




\title{A New Baseline Assumption of Integated
Gradients Based on Shaply value}

\author{Shuyang Liu\textsuperscript{\rm 1*}, 
    Zixuan Chen\textsuperscript{\rm 1*},
    Ge Shi\textsuperscript{\rm 2}, 
    Ji Wang\textsuperscript{\rm 2}, 
    Changjie Fan\textsuperscript{\rm 3},
    Yu Xiong\textsuperscript{\rm 3},
    Runze Wu\textsuperscript{\rm 3}\\
    Yujing Hu\textsuperscript{\rm 3},
    Ze Ji\textsuperscript{\rm 4},
    Yang Gao\textsuperscript{\rm 1}\\
}
\affiliations{{\normalsize{$^1$ Nanjing University, Nanjing, China}}\\
{\normalsize{$^2$ University of California, Davis, California, USA}}\\
{\normalsize{$^3$ NetEase Fuxi AI Lab, Hangzhou, China}}\\
{\normalsize{$^4$ Cardiff University, Cardiff, U.K.}}}

\begin{document}

\maketitle

\footnote{*These authors contributed to the work equally and should be regarded as co-first authors.}


\begin{abstract}
    Numerous attempts have been made to understand deep neural networks (DNNs) by linking their predictions to input features. Integrated Gradients (IG) is a prominent method in this realm. The selection of baselines in IG plays a crucial role in providing meaningful and unbiased explanations of model predictions across various contexts. However, the conventional practice of using a single baseline often falls short, necessitating the use of multiple baselines. The intrinsic relationship between IG and the Aumann-Shapley Value offers a fresh perspective on designing baselines. We theoretically establish that, under specific hypotheses, a set of baselines corresponds to coalitions in the Shapley Value framework. Consequently, we introduce a new method for constructing baselines, termed Shapley Integrated Gradients (SIG), which employs proportional sampling to approximate the Shapley Value computation pathway. Simulations in GridWorld confirm that SIG closely mirrors the distribution of Shapley Values. Additionally, experiments on various image processing tasks reveal that SIG, compared to other baseline methods in IG, provides a more accurate assessment of feature contributions, yields more consistent explanations across different applications, and adapts seamlessly to various data types with minimal computational increase.
\end{abstract}

\section{\RomanNumeralCaps{1}. Introduction}
As artificial intelligence (AI) becomes more sophisticated and indispensable in our daily lives, the necessity of transparency and accountability in AI decision-making has grown significantly.~\cite{adadi2018peeking, gunning2019darpa} 
To this end, Explainable Artificial Intelligence (XAI) was proposed and has been increasingly studied within the community that focuses on developing machine learning models and systems that can offer understandable explanations for their decisions and predictions~\cite{arrieta2020explainable, das2020opportunities}. Research on XAI boomed exceptionally after the paramount success of deep neural networks (DNNs), as they are considered by many as complex yet intriguing ``black-box'' models.~\cite{angelov2020towards, ras2022explainable}

Among numerous XAI methods, 
attributing the prediction of a deep network to its input features is particularly popular due to its intuitive explanations and broad applicability~\cite{simonyan2013deep, ancona2017towards, heskes2020causal}. 
One of these attribution methods, Integrated Gradients (IG)~\cite{sundararajan2017axiomatic} stands out as a leading algorithm by integrating gradients of the model's output with respect to input along a straight path from baseline to input. The mechanism and easy-to-implement property of IG make it particularly useful and applicable for understanding how input features impact the model's output.
Since its introduction, IG has continuously evolved \cite{ghorbani2019interpretation,jha2020enhanced,yang2023idgi,pourdarbani2023interpretation, yang2023idgi}, language models' explanation \cite{enguehard2023sequential}, and electronic health \cite{duell2023batch}.

For IG, or path attribution methods in general, it is usually required to choose a hyperparameter known as the baseline input. 
Crucially, the choice of an appropriate baseline for IG plays a pivotal role in providing meaningful feature attributions and explanations~\cite{frye2020shapley}. Take image classification task as an example: the gradient at a pixel represents the direction, in which the function increases most rapidly, indicating the pixel's influence on the model's prediction. Since the contributions of players are measured by accumulating gradients on images interpolated between the baseline value and the current input, the choice of baseline has an outstanding influence on these gradients. 

There are three baseline inputs that are commonly adopted in XAI: random \cite{jha2020enhanced}, zero~\cite{ancona2019explaining, aas2021explaining}, and mean \cite{dabkowski2017real}, all of which are single baseline values.
We briefly give justifications of the pitfalls of choosing a single baseline value.
(\romannumeral1) A single baseline that is naturally considered as neutral and task unrelated by human might not have a ``near-zero'' score as recommended by~\cite{sundararajan2017axiomatic} for many models. For instance, for the function of interest $F_1(x) = x+1$, if a baseline of zero is used, namely $x_0 = 0$, then $F_1(x_0) = 1$ is not ``near zero'', thus making interpretability implausible.
(\romannumeral2) A single baseline that's meticulously crafted for a specific model is not generic to all instances. A straightforward example is $F_2(x) = x^3-6x^2+11x-6$ where $x_1 = 1, x_2 = 2, x_3 = 3$ are the solutions to $F_2(x) = 0$. Choosing any one of them as the baseline might introduce incompleteness and unintended bias into the attributions. The caveats of utilising single baselines are also discussed and validated by~\citet{chen2022explaining}.

Intuitively, the challenges associated with a single baseline motivate us to exploit a set of baselines for IG. We therefore delve deep into IG method, rethinking how to construct a set of well-interpretable and consistent baselines. We traced back to the underlying scheme of IG. 
Although applied in distinct fields, as an attribution method, IG corresponds to a ``shortcut'' approximation of a cost-sharing method in economics called Aumann-Shapley Value~\cite{aumann1974values} where
the model, base features, and attributions of IG are analogous to the cost function, players, and cost-shares of Aumann-Shapley Value respectively.~\cite{sundararajan2020many}
Furthermore, we show that under certain assumptions, coalitions in Shapley Values involving marginal contributions can be treated as baselines and explained samples. Specifically, we assume that the baselines correspond to the coalitions in Shapley Values. This means that we are able to select a set of baselines to model the computation of the Shapley Values in a concise and efficient manner, echoing our objective of utilizing a set of baselines.
Therefore, based on these assumptions we propose a novel baseline construction method called {\it \textbf{S}hapley \textbf{I}ntegrated \textbf{G}radients} ({\bf SIG}). SIG assumes baseline corresponds to coalitions in Shapley Value and search for a set of baseline by proportional sampling to partly simulate the computation path of Shapley Value.
To validate the effectiveness, genericnesss and consistency of SIG, we conduct experiments in two main types of environments according to the different nature of task inputs: (1) simulation of Shapley Values in GridWorld maze environment \cite{sutton2018reinforcement} to demonstrate that SIG approximates the proportion of Shapley Values; and (2) a variety of image input tasks, including image classification \cite{he2016deep}, and expression codes \cite{zhang2021learning}, to reveal that SIG produces enhanced explanation and more consistent interpretation results across a variety of tasks.



To summarize, our contributions are three-folds: (\romannumeral1) we analyse the drawbacks of using a single baseline for IG and hence a set of baselines is desirable; we then rethink baseline design for IG from the perspective of Shapley Value, and theoretically show that under certain hypothesis, the coalitions in Shapley Values can be regarded as a set of baselines for IG; (\romannumeral2) we propose a novel baseline construction method named SIG that partially models the computational path of the Shapley Value by finding the set of baselines; (\romannumeral3) The experimental results demonstrate that the proposed SIG provides a new perspective on IG, showing improved and more consistent interpretation than existing methods in a wide array of applications, and is generic to various data types and instances, thus making it of considerable value for practical applications.



\section{\RomanNumeralCaps{2}. Related Work}

\paragraph{Integrated Gradients and the Choice of Its Baselines} Integrated Gradients (IG)~\cite{sundararajan2017axiomatic} fuses  
the implementation invariance of gradients with sensitivity-based techniques, necessitating a crucial baseline value, as noted by the authors. \citet{mudrakarta2018did} illuminated computation path of IG is the straight-line between baseline and input sample.

Recent studies \cite{dabkowski2017real, merrick2020explanation,kumar2020problems,binder2016layer,shrikumar2017learning, frye2020shapley, tan2023maximum} have provided experiential advice on 
choosing baseline values, while falling short of theoretical illumination. 
For example, \citet{dabkowski2017real} opted to use the average of randomly selected samples from the dataset as the baseline; \citet{frye2020shapley} established baseline value of a pixel in relation to surrounding pixels; 
\citet{chen2021explaining} perceived the baseline as a representation of the background distribution, a viewpoint that harmonizes with our perspective; \citet{feng2022comparing} also highlights that the computational path of IG is not equivalent to that of the Shapley Value.


\paragraph{Shapley Value} \citeauthor {shapley1951notes}~\shortcite{shapley1951notes} proposed Shapley Value to allocate contributions to players in a cooperative setting. It is the only distribution with linearity, nullity, symmetry, and efficiency axioms. \citeauthor{aumann2015values}~\shortcite{aumann2015values} extended the concept of Shapley Value to infinite game. Prior attempts have been made to incorporate Shapley Values to attribution explanations: \citet{lundberg2017unified} proposed Shapley Additive exPlanations (SHAP), in which each feature of the model is viewed as a player in the game, the model itself is seen as a utility function, and the chain rule is employed to diminish computational complexity;~\citet{ghorbani2019data} enhanced the efficiency of Shapley Value estimation by using Monte Carlo Sampling and gradient-based methods; \citet{mitchell2022sampling}
proposed a new approach to mitigating the issue of slow convergence of standard Monte Carlo sampling sampling;
\citet{chen2022explaining} pointed out that utilizing multiple baselines mitigates the potential bias that can arise from attributions based on a single baseline. 

\section{\RomanNumeralCaps{3}. Preliminaries}
\paragraph{Notation} A table of notations used in this work is presented in the Appendix.

\paragraph{Shapley Value}
Consider a game with $n$ players and a utility function $v$, where $v : 2^N \rightarrow \mathbb{R}$ maps subsets of players to real numbers. Any set of players in $N$ is called a coalition $S$. For a given coalition $S$ and a player $x_i$ such that $x_i \notin S$, the marginal contribution of player $x_i$ to coalition $S$ is defined as $v(S\cup x_i) - v(S)$. The Shapley Value of player $x_i$, denoted by $f_{SV}(x_i)$, is then computed as the sum over all coalitions $S \in N/\{i\}$, weighted by the probability of selecting each coalition. Specifically,
\begin{equation}\label{Eq_1}
    f_{SV}(x_i)=\sum_{S\in N/\{i\}}\frac{|S|!(|N|-|S|-1)!}{|N|!}
    (v(S\cup x_i)-v(S))
\end{equation}

\paragraph{Aumann-Shapley Value}
With the extension to infinite game, Aumann-Shapley Value was proposed with the definition that $ds$ represents infinitely small player in game, $I$ represents the complete set of players and $tI$ is a perfect sampling, representing a proportion $t$ of all the players. Aumann-Shapley Value can be written as follows:
\begin{equation}\label{Eq_2}
    f_{SV}(ds)=\int^1_0v(tI+ds)-v(tI)dt
\end{equation}

\paragraph{Integrated Gradients}
Integrated Gradients (IG) is a method that was developed to attribute the prediction of a DNN to its input features. It integrates the gradient of the prediction concerning the input features over a straight-line path between the input $x$ and a baseline $x'$. The IG for a model $F$ is expressed as follows:
\begin{equation}\label{Eq_3}
    f_{IG}(x_i)=(x_i-x'_i)\times\int_{0}^1\frac{\partial F(x'+\alpha(x-x'))}{\partial x_i}d\alpha
\end{equation}

\section{\RomanNumeralCaps{4}. Method}
In this section, we will first present Intergrated Gradients (IG) from the perspective of Shapley Value. Through this view, we discover and analyze the shortcomings of IG: calculation path of IG takes a straight line shortcut compared to that of Shapley Value. Building upon this observation, we propose our baseline construction method, Shapley Integrated Gradients (SIG).

\subsection{Integrated Gradients From the Perspective of Shapley Value}
Given an explained sample $x = [x_1, \ldots, x_n]$, a baseline example $x' = [x'_1, \ldots, x'_n]$, and a function $F:R^n \rightarrow [0, 1]$ that represents a deep network, we prove that IG is associated with Aumann-Shapley Value as follows:

\begin{theorem} \label{Theorem}
    Suppose $x'$ represents the empty set $\emptyset$ with the absence of all features, $x$ represents the complete set $I$ with the presence of all features, and utility function of any sample $x$ is evaluated as $v(x) = F(x) - F(x')$, the integral of Integrated Graidents of all features is a simulation of the integral of Aumann-Shapley Values for all players when the model is evaluated along the linearly interpolated path between baseline and explained sample.
\end{theorem}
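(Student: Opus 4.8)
The plan is to prove the claim in two stages: first show that summing the per-feature quantities on each side collapses to the same scalar $F(x) - F(x')$ (an efficiency/completeness identity), and then establish the stronger feature-by-feature correspondence that justifies calling one a ``simulation'' of the other.

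I would begin with the IG side. Setting $g(\alpha) = F\big(x' + \alpha(x - x')\big)$ and differentiating via the chain rule gives $g'(\alpha) = \sum_{i=1}^{n} (x_i - x'_i)\,\partial_{x_i} F\big(x' + \alpha(x-x')\big)$. Summing \eqref{Eq_3} over all features and swapping the finite sum with the integral then yields $\sum_{i=1}^{n} f_{IG}(x_i) = \int_0^1 g'(\alpha)\,d\alpha = g(1) - g(0) = F(x) - F(x')$, which is the completeness property of IG.

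For the Aumann--Shapley side I would invoke the hypotheses directly: since $v(x) = F(x) - F(x')$, the empty coalition gives $v(x') = 0$ and the grand coalition gives $v(x) = F(x) - F(x')$. By the efficiency axiom, integrating $f_{SV}(ds)$ over the whole player set $I$ returns $v(I) - v(\emptyset) = F(x) - F(x')$, matching the IG total. To upgrade this to a term-by-term statement, I would identify the partial sample $tI$ in \eqref{Eq_2} with the interpolated point $x' + t(x-x')$ on the straight-line path, and the infinitesimal player $ds$ tied to feature $i$ with an infinitesimal increment in the $i$-th coordinate. Under this dictionary the marginal contribution $v(tI + ds) - v(tI)$ is, to first order, $(x_i - x'_i)\,\partial_{x_i} F\big(x' + t(x-x')\big)$, so that the Aumann--Shapley value of feature $i$ collapses to $(x_i - x'_i)\int_0^1 \partial_{x_i} F\big(x' + t(x-x')\big)\,dt$, which is precisely \eqref{Eq_3}; this is the diagonal form of the Aumann--Shapley value restricted to the linear path.

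The hardest part will be making this last identification rigorous. The Aumann--Shapley formalism lives in the setting of nonatomic games with a continuum of infinitesimal players ($ds$, $tI$), whereas IG is an ordinary integral over a scalar path parameter, so the core difficulty is justifying the passage from the discrete marginal contribution $v(tI + ds) - v(tI)$ to the directional derivative. This step needs the continuous differentiability (and gradient integrability) of $F$ along the segment, a precise correspondence between ``proportion $t$ of all players'' and the interpolation parameter $\alpha = t$, and a limiting argument controlling the higher-order terms in $ds$. Once this correspondence is fixed, the remaining steps are routine applications of the fundamental theorem of calculus and linearity of the integral, and both the efficiency identity and the per-feature equality follow.
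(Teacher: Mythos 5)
Your proposal is correct, and its first two stages are essentially the paper's own proof, stated more cleanly: the paper likewise shows that the total IG attribution collapses via the fundamental theorem of calculus to $F(x)-F(x')$ (after exchanging the ``integral over features'' with the path integral, which is exactly your $g(\alpha)$ computation), and that the total Aumann--Shapley mass equals $v(I)=F(x)-F(x')$ under the stated utility; the theorem's ``simulation'' claim is nothing more than this aggregate identity. Where you diverge is your third stage: the paper never attempts a feature-by-feature correspondence, and deliberately so --- its very next section argues that per-feature IG attributions \emph{differ} from the (discrete) Shapley values because IG's straight-line path is a shortcut relative to the polyline coalition paths, and that discrepancy is the entire motivation for the proposed SIG algorithm. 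Your per-feature identification is not wrong, but note what it actually delivers: once you \emph{define} the partial coalition $tI$ to be the interpolated point $x'+t(x-x')$ and take $ds$ to be an infinitesimal coordinate increment, the equality of IG with this ``diagonal'' Aumann--Shapley value holds essentially by construction (this is the known IG/Aumann--Shapley correspondence), so the limiting argument you flag as the hard step buys a definitional restatement rather than the stronger, and false in general, claim that IG recovers Shapley-type per-feature attributions. For the theorem as stated, your stages one and two already suffice, and they match the paper's route.
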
  
\begin{proof}
We treat each feature $x_i$ of sample $x$ as a player in game theory and the mix up of $x$ and $x'$ in a feature-wise binary on-off manner as a coalition. Thus, the worth of perfect sample $tI$ including coalitions with $t$ proportion of $x$ and $1-t$ proportion of $x'$ is represented as $v(tI)$. The contribution of $ds$ to the coalition is $v(tI+ds)-v(tI)$. Aumann-Shapley Value computes the contribution of an infinitesimal player $ds$ by integrating the functional gain of adding the player to the perfect sample $tI$ of the all-player $I$ at all proportions $t\in[0, 1]$. When $t=0$ and $t=1$, we get an $\emptyset$ and a complete set $I$ respectively. Therefore, the worth of the complete set $I$ that represents the integral contribution of all players can be written as follows:

\begin{align*}
v(I)&=\int_{\emptyset}^{I}f_{SV}(ds)\\
&=\int_{\emptyset}^{I}\int_0^1 v(tI+ds)-v(tI)dt\\
&=\int_{\emptyset}^{I}\int_0^1\frac{v(tI+ds)-v(tI)}{ds}dt ds\\
&=\int_{\emptyset}^{I}\int_0^1\frac{d v(tI)}{d s}dtds
\end{align*}

Meanwhile, IG accumulates the contribution of each feature by integrating the partial derivatives of model $F$ with respect to the feature at points along a straight-line path $[x', \ldots, x'+\alpha(x-x'), \ldots x]$ from the baseline to the explained sample. As such, the integral of IG over all features from $\emptyset$ to $I$ can be written as follows:
\begin{align*}
\int_{\emptyset}^{I}f_{IG}(dx) &=\int_{\emptyset}^{I}(x_i-x_i') \int_{0}^1\frac{\partial F(x'+\alpha(x-x'))}{\partial x_i}d\alpha dx\\
&=\int_{0}^1\int_{\emptyset}^{I}(x_i-x_i')\frac{\partial F(x'+\alpha(x-x'))}{\partial x_i}dxd\alpha\\
&=\int_{0}^1 \nabla F(x'+\alpha(x-x')) d\alpha\\
&=F(x)-F(x') = v(I)
\end{align*} 
\end{proof}

Interestingly, indicated by the proof above, the integral of IG approaches the integral of Shapley Value but though a different computation path. Each feature in IG plays essentially the same role as a player in Shapley Value. Uppon this proof, we use player and feature interchangeably in the following paragraphs.

\subsection{Limitations of Integrated Gradients}
\begin{figure}[htbp]
   \centering
	\includegraphics[width=\columnwidth]{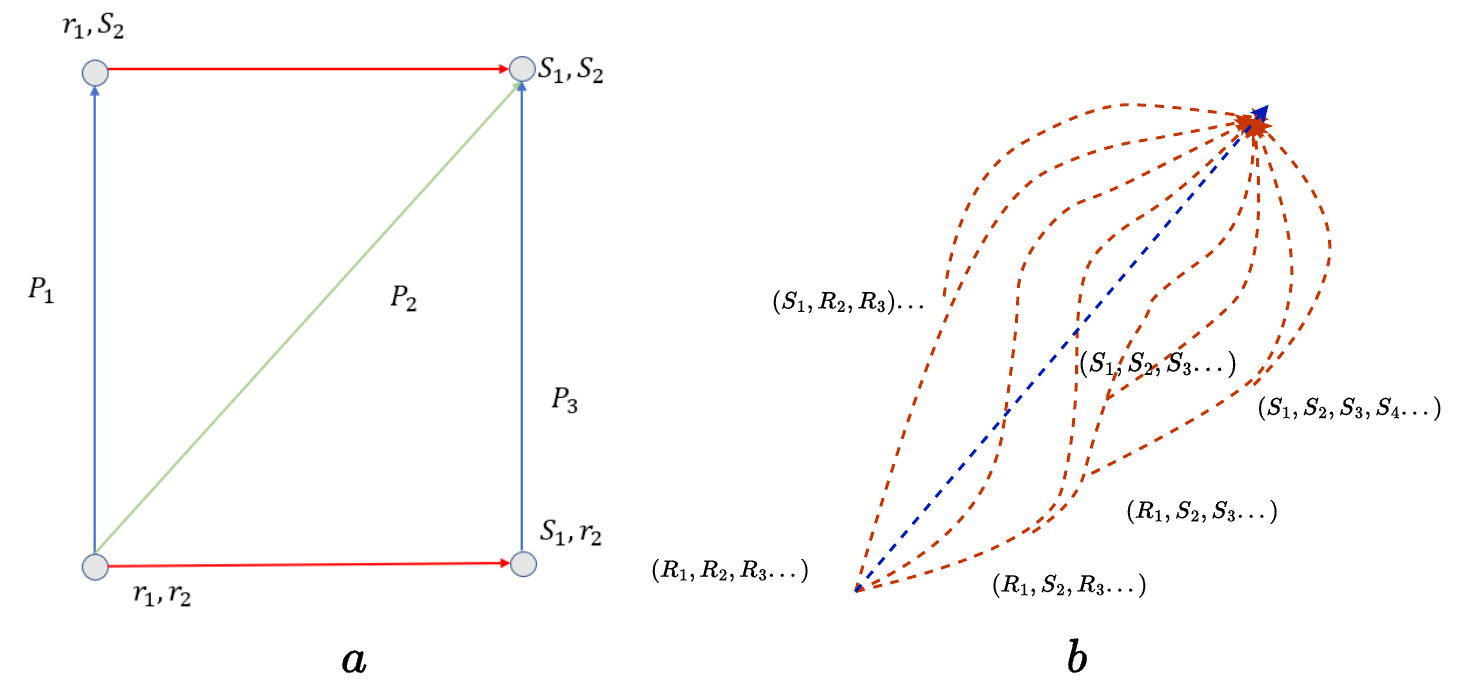}
	\caption{($a$) For a two-features input, red lines represent calculation of Shapley Value for feature $S_1$ and blue lines represent that for feature $S_2$. While Path $P_2$ is calculation path of IG. ($b$) Red paths are the calculation path of Shapley Value while blue path is the calculation path of IG for a $n$-features input.}
    \label{path}
\end{figure}

From the aforementioned Theorem{~\ref{Theorem}}, we derive that IG approaches the Aumann-Shapley Value in computing the contribution of the entire feature set. However, the goal of attribution methods is to compute the contribution of each individual feature. We illustrate their difference with a simple example.

As is shown in Fig.{~\ref{path}}($a$), we assume that there is a two-features input of the deep model, denoted as $x = (S_1, S_2)$ with its corresponding baseline as $x'=(r_1,r_2)$. Then the set of coalitions encompasses $\left(r_1,r_2\right)$, $\left(S_1,r_2\right)$, $\left(r_1,S_2\right)$, and $\left(S_1,S_2\right)$. The Shapley Value of features $S_1$ and $S_2$ are:

\begin{align*}
    f\left(S_1\right)&=\frac{\left(v\left(S_1,r_2\right)-v\left(r_1,r_2\right)\right)}{2}+\frac{\left(v\left(S_1,S_2\right)-v\left(r_1,S_2\right)\right)}{2} \\
    f\left(S_2\right)&=\frac{\left(v\left(r_1,S_2\right)-v\left(r_1,r_2\right)\right)}{2}+\frac{\left(v\left(S_1,S_2\right)-v\left(S_1,r_2\right)\right)}{2}
\end{align*}

From the computation of Shapley Value for $S_1$ and $S_2$, we observe that calculation paths traverse through all four points of coalitions, represented by the blue and red lines in Fig.{~\ref{path}}. The calculation path of Shapley Value can be denoted as $P_1$ and $P_3$. On the other hand, as the proof shows, we ascertain that IG actually computes contribution along the straight line $P_2$. It becomes apparent that calculation path of IG, expressed as $P_2$, takes a shortcut compared to path of Shapley Value, shown as $P_1$ and $P_3$.



Extending to n-features setting, as shown in Fig.{~\ref{path}}($b$), the computation of Shapley Value takes many polyline paths while the computation of IG approximates it by taking a straight line short cut between baseline and the explained sample. Consequently, this discrepancy may result in inaccurate attributions of individual features. 

\subsection{Shapley Integrated Gradients} 
In above subsection, we observe that IG may lead to inaccuracy in estimating the contributions for features caused by the shortcut of calculation path compared to Shapley Value. However, because of the single path computation, it's efficiency is obvious.  Contrastively, Shapley Value is appreciated for its accuracy in determining individual contributions, its computational burden of leveraging all coalitions, subjected to $\mathcal{O}(2^n)$ where $n$ is the number of features, restricts its application. To take some benefits of both, we combine Shapley Value with IG and propose \textbf{\textit{Shapley Intergratd Gradients}} (SIG), a novel baseline construction method for improving IG from the perspective of Shapley Value. 

The intuition of the algorithm comes from Equation{~\ref{Eq_1}}. As shown, the Shapley Value of feature $i$ is the weighted average of marginal contributions of a player across all possible coalitions $\{S; S\in N/\{i\}\}$, where the weight $w_i(S)$ and marginal contribution $V_i(S)$ are defined as:

\begin{equation} \label{Eq_4}
w_i(S) = \frac{|S|!(|N|-|S|-1)!}{|N|!}
\end{equation}

\begin{equation} \label{Eq_5}
V_i(S) = v(S\cup x_i)-v(S)
\end{equation}

By inspecting the above two factors, here are some interesting observations:

\begin{observe} \label{Ob1} 
    Given a game $N$, $w_i(S)$ is only dependent on the number of players $k=|S|$ in the coalition $S$. Therefore, we can merge $\{w_i(S); \forall S \text{ s.t. } |S|=k \wedge \forall i, i \in N\}$ as a universal weight value $w(k)$. This value can be precomputed as long as we decide $k$.
\end{observe}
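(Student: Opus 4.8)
The plan is to read off the claim directly from the closed-form weight in Equation~\ref{Eq_4}. First I would observe that the right-hand side $\frac{|S|!(|N|-|S|-1)!}{|N|!}$ is built entirely out of two quantities: the cardinality $|S|$ of the coalition and the total player count $|N|$. Crucially, neither the particular members of $S$ nor the index $i$ of the player being attributed enters the expression; the subscript $i$ and the argument $S$ influence the weight only through the single integer $|S|$.

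Next I would fix the game so that $|N|$ is a constant, and substitute $k = |S|$ to define $w(k) = \frac{k!\,(|N|-k-1)!}{|N|!}$. For any two coalitions $S, S'$ with $|S| = |S'| = k$ and any players $i, j$ (with $i \notin S$ and $j \notin S'$), this immediately yields $w_i(S) = w(k) = w_j(S')$, so collapsing the family $\{w_i(S) : |S| = k,\ i \in N\}$ to the single value $w(k)$ is well defined. Because $w(k)$ depends only on $k$ and the fixed $|N|$, it can be tabulated in advance once $k$ is chosen.

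The main thing to be careful about---rather than a genuine obstacle---is the admissible range of $k$. Since $w_i(S)$ is only ever evaluated for $i \notin S$, the factor $(|N|-|S|-1)!$ is always well defined, which constrains $k$ to $0 \le k \le |N|-1$; I would record this range so that $w(k)$ is never asked to evaluate a factorial of a negative argument. Beyond this bookkeeping, the observation is an immediate consequence of the symmetry of the weight formula in the labels of the players, and no further computation is required.
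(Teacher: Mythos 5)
Your proposal is correct and matches the paper's own (implicit) justification: the paper treats Observation~\ref{Ob1} as immediate from the closed form in Equation~\ref{Eq_4}, which depends on $S$ and $i$ only through $|S|$ and the fixed $|N|$, exactly as you argue. Your additional bookkeeping on the admissible range $0 \le k \le |N|-1$ is a sensible (and harmless) refinement the paper leaves unstated.
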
 

\begin{observe} \label{Ob2} 
The sum of $\{w(k); \forall S \text{ s.t. } |S|=k \wedge \forall k, k \in [0, |N|-1]\}$ is $1$ since $w(k)$ represents the inverse value of the number of combinations $\binom{|N|-1}{|S|-1}$ times the number of different values of $k$ we can choose from the game. $k$ acts like the proportion $t$ in Equation{~\ref{Eq_2}} to measure the magnitude of coalitions. Thus, all coalitions are naturally categorized into $|N|$ groups subjected by the size $k$ and $\sum_{S} w(k) = 1/|N|$. 
\begin{align*}
\frac{1}{w(k)} &= \frac{|N|!}{|S|!(|N|-|S|-1)!} \\
&= |N| \times \frac{|N-1|!}{k!(|N|-1-k)!} \\
&= |N| \times \binom{|N|-1}{k}
\end{align*}
\end{observe}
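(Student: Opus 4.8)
The plan is to prove the two quantitative claims packaged in the statement: first, that summing the universal weight over all coalitions of a fixed size $k$ yields $1/|N|$, and second, that summing these group totals over all admissible sizes yields $1$. Both reduce to a single combinatorial cancellation, so I would organize the argument around that identity rather than manipulating Equation~\ref{Eq_1} directly. The remark that $k$ ``acts like the proportion $t$'' in Equation~\ref{Eq_2} is an interpretive aside and carries no proof obligation; I would set it aside and focus on the normalization.

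First I would put the universal weight in closed form. By Observation~\ref{Ob1}, a coalition $S$ with $|S|=k$ has weight depending only on $k$, and the displayed computation gives $1/w(k) = |N|\binom{|N|-1}{k}$, hence $w(k) = \tfrac{1}{|N|\binom{|N|-1}{k}}$. Next I would count the coalitions contributing at each fixed size: since the coalitions in the Shapley Value of player $x_i$ range over subsets $S \subseteq N\setminus\{i\}$, a ground set of $|N|-1$ elements, there are exactly $\binom{|N|-1}{k}$ coalitions with $|S|=k$, and $k$ runs over $\{0,1,\ldots,|N|-1\}$, i.e.\ $|N|$ distinct sizes. Multiplying the count by the weight gives the per-group total,
\begin{equation*}
\sum_{S\,:\,|S|=k} w(k) = \binom{|N|-1}{k}\cdot\frac{1}{|N|\binom{|N|-1}{k}} = \frac{1}{|N|},
\end{equation*}
and summing over the $|N|$ admissible values of $k$ yields $\sum_{k=0}^{|N|-1}\tfrac{1}{|N|} = 1$, the full normalization.

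The calculation itself is elementary; the only point demanding care---and the place a hasty argument would slip---is the bookkeeping of which set the coalitions are drawn from. The binomial coefficient in the denominator of $w(k)$ is $\binom{|N|-1}{k}$ precisely because player $i$ is excluded, and it must be matched against the count $\binom{|N|-1}{k}$ of size-$k$ subsets of that same $(|N|-1)$-element set so the two cancel exactly. I would therefore state the ground set and the range of $k$ explicitly, since an off-by-one in the upper limit (for instance allowing $k=|N|$) would introduce a spurious or ill-defined term and push the total away from $1$.
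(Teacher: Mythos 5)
Your proposal is correct and follows essentially the same route as the paper: the same algebraic identity $1/w(k) = |N|\binom{|N|-1}{k}$, the same count of $\binom{|N|-1}{k}$ coalitions per size drawn from $N\setminus\{i\}$, giving the per-group total $1/|N|$ and the overall sum $1$ over the $|N|$ admissible sizes. Your version is in fact more explicit than the paper's, which leaves the coalition-counting and cancellation steps verbal; your emphasis on the ground set being $N\setminus\{i\}$ (hence $k \le |N|-1$) is exactly the bookkeeping the paper glosses over.
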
 

\begin{observe} \label{Ob3} 
As a direct conclusion of Theorem{~\ref{Theorem}}, the marginal contribution of adding $i;\forall i \in N$ to $S$ can be approximated by using IG from baseline $S$ to complete set $I$. 
\end{observe}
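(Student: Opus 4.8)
The plan is to reduce Observation~\ref{Ob3} to a rerun of the calculation in Theorem~\ref{Theorem} with a shifted baseline. Inspecting the proof of Theorem~\ref{Theorem}, the chain of equalities that yields $\int_\emptyset^I f_{IG}(dx) = F(x) - F(x') = v(I)$ uses only two facts: the completeness identity $\int_0^1 \nabla F(x' + \alpha(x - x'))\,d\alpha = F(x) - F(x')$ (the fundamental theorem of calculus along the straight-line path), and the interchange of the feature integral with the path integral. Neither step depends on the baseline being the empty set. Accordingly, I would first restate the theorem with an arbitrary coalition $S$ in the role of $x'$: taking the configuration $S$ (features in $S$ set to their input values, the rest to their reference values) as baseline and the complete set $I$ as target, the identical argument gives
\begin{equation*}
\int_S^I f_{IG}(dx) = F(I) - F(S) = v(I) - v(S).
\end{equation*}

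Second, I would descend from this aggregate identity to the per-feature level. For baseline $S$ the attribution $f_{IG}(x_i) = (x_i - S_i)\int_0^1 \partial_{x_i} F\big(S + \alpha(I - S)\big)\,d\alpha$ vanishes whenever $i \in S$, since there $x_i = S_i$. Hence the identity collapses to $\sum_{i \notin S} f_{IG}(x_i) = v(I) - v(S)$, and each surviving summand $f_{IG}(x_i)$ is the natural proxy for the marginal contribution $V_i(S) = v(S \cup x_i) - v(S)$ of Equation~\ref{Eq_5}. This is the sense in which a single IG pass from $S$ to $I$ recovers, simultaneously for all $i \notin S$, an approximation of the whole vector of marginal contributions, which is what the SIG construction needs.

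Third, I would pin down why this is an approximation rather than an equality, which I expect to be the crux. Were only feature $i$ toggled, i.e.\ the target were $S \cup x_i$ rather than $I$, completeness would give $f_{IG}(x_i) = v(S \cup x_i) - v(S)$ exactly; this already holds when $|S| = |N| - 1$, and more generally whenever $F$ is linear, because then the straight-line path and the coordinate-wise Shapley polyline assign identical per-feature increments. For general nonlinear $F$ the gap is precisely the straight-line shortcut $P_2$ versus the polyline $P_1, P_3$ of Fig.~\ref{path}: along the IG path all missing coordinates are scaled together by $\alpha$, so the gradient accumulated in direction $i$ absorbs cross-effects from the other features being turned on at the same time. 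The main obstacle is that making ``approximated'' quantitative would require bounding $f_{IG}(x_i) - V_i(S)$ in terms of the off-diagonal curvature of $F$ along the path; for Observation~\ref{Ob3} it is enough to exhibit the exact cases and attribute the residual to this path mismatch, leaving a sharp error estimate aside.
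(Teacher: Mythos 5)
Your proposal is correct and takes essentially the same route the paper intends: the paper supplies no explicit proof of Observation~\ref{Ob3} (it is asserted as a direct corollary of Theorem~\ref{Theorem}), and your re-instantiation of the theorem's completeness argument with the coalition $S$ playing the role of the baseline $x'$, together with the collapse of the per-feature attributions onto $i \notin S$, is precisely the reading on which the SIG algorithm rests. Your third step --- isolating the exact cases ($|S|=|N|-1$, linear $F$) and attributing the residual to the straight-line-versus-polyline path mismatch --- goes beyond what the paper writes down, but it is consistent with the paper's own admission that ``the marginal contribution approximated by IG is a deviated from the accurate definition of Shapley Value,'' so it strengthens rather than departs from the intended argument.
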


In the definition of Shapley Value in Equation{~\ref{Eq_1}}, the marginal contributions of of all the coalitions $S$ are all computed and then weighted by their corresponding $w_i(S)$ whose quantity is exponentially large. However, thanks to the observations above, we leverage a ``trick'' called \textbf{proportional sampling}, as is shown in Fig.{~\ref{shapley}}($a$), to circumvent it.

\begin{figure}[htbp!]
   \centering
	\includegraphics[width=\columnwidth]{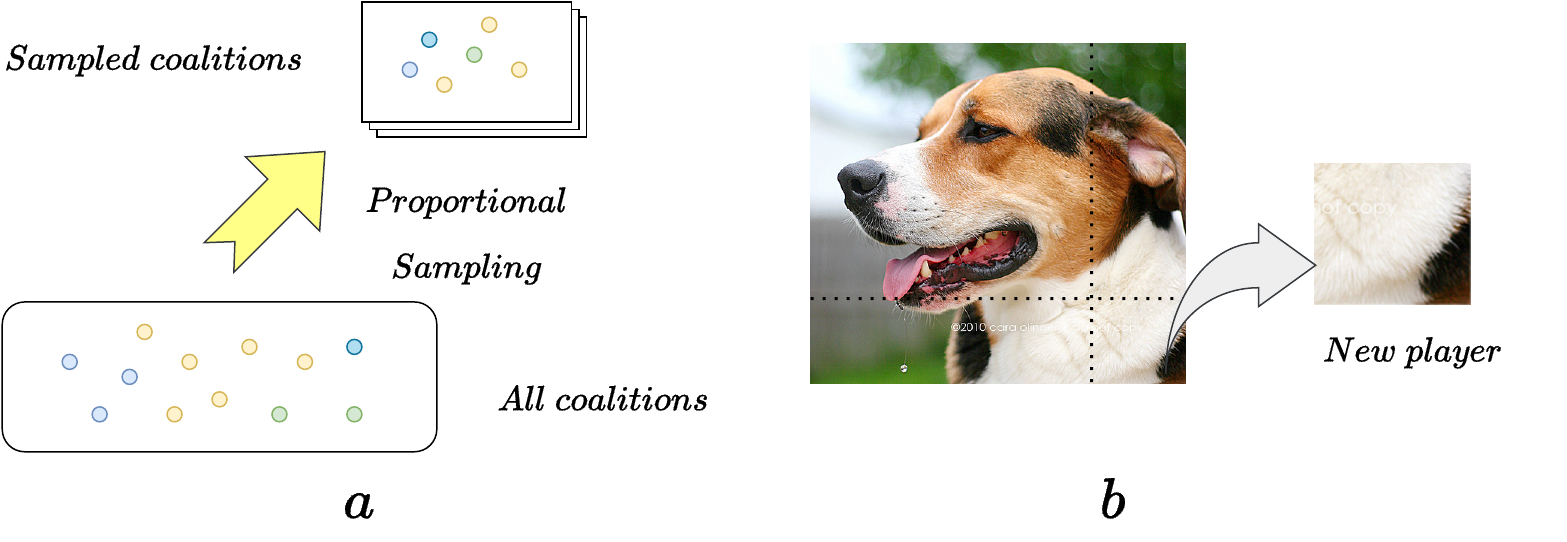}
    \caption{($a$) Proportional sampling in our SIG. Different colored nodes represent different weights defined by Equation{~\ref{Eq_4}}. ($b$) Construction of new players. A patch of pixels is considered as a new player/feature on which we search the baseline set.}
    \label{shapley}
\end{figure}

\paragraph{Proportional Sampling.} Instead of sampling coalitions from a binomial distribution and weight them by their corresponding $w_i(S)$, an alternative approach is to sample coalitions proportional to the precomputed $w_i(k)$ and weight them uniformly by 1.

Here, we justify the ``trick'' creates an unbiased estimator of Shapley Value briefly. Due to Observation{~\ref{Ob1}}, the weight of coalition $S$ can be precomputed once we select a $k$, and it's a constant no matter which specific coalition is drawn as long as it's size is $k$. Due to Observation{~\ref{Ob2}}, the expected marginal contribution of different sizes are equally treated for different $k$ values. Therefore, the prior importance of each coalition are universally 1. Suppose we draw an infinite large number of samples, the expectation of $V_i(S)$ drawn proportional to weight $w(k)$ is an infinite approach to that of $V_i(S)$ weighted by $w(k)$. The detailed proof is in the Appendix. 

In addition, Observation{~\ref{Ob3}} enables reducing the computation burden of computing the marginal contributions of all players $i$ one by one but complete them at once. Since IG is a gradient based approach which is supported by GPUs, leveraging it gives SIG more potential to speed up.  

To sum up, the three observations inspire our proposed SIG algorithm that constructs a set of sampled coalitions with different magnitude $k$ as baselines of IG to approximate Shapley Value in an efficient way. 

\subsection{SIG Algorithm} 
We posit that a coalition in Shapley Value corresponds to a baseline in IG. We aim to obtain relative Shapley Value since people typically care more about relative contributions of features rather than the absolute contributions. The overall algorithm, termed as \textit{Shapley Integrated Gradients} (SIG), is presented in Algorithm{~\ref{algo1}}.

\begin{algorithm}
\caption{Shapley Intergratd Gradients (SIG)}
\begin{algorithmic}
\STATE \textbf{INPUT}: explained sample $x$, default sample $x'$, model $F$
\STATE \textbf{PARAMETER}: player set $N$, sample size $B$
\STATE \textbf{OUTPUT}: approximated Shapley Values $\mathcal{V}$
\STATE Initialize baseline set $D = \emptyset$
\FOR{$k \gets 0$ to $N-1$}
    \STATE Compute weight $w(k)$ as Equation{~\ref{Eq_4}}
\ENDFOR
\STATE Normalize all weights $\hat{w}(k) = w(k) /  \sum_{k=0}^{N-1}w(k)$
\WHILE{$|D|<B$}
    \FOR{$k \gets 0$ to $N-1$}
        \STATE Draw a random number $r \sim \mathbb{U}(0, 1)$
        \IF{$r < \hat{w}(k)$}
            \STATE Create a coalition $S$ by randomly selecting $k$ players from $N$
            \STATE Construct a baseline $d$ from $S$ based on $x$ and $x'$ 
            \STATE Add $d$ to $D$
        \ENDIF
    \ENDFOR
\ENDWHILE
\STATE Concatenate $D$ into a tensor with the mini-batch size $B$ 
\STATE Compute mini-batch IG $\mathcal{A} = f_{IG}(x, D, F)$ as Equation{~\ref{Eq_3}}
\STATE Average through mini-batch $\mathcal{V} = \frac{1}{B} \sum_{i=1}^{B} \mathcal{A}_{i}$
\RETURN $\mathcal{V}$
\end{algorithmic} 
\label{algo1}
\end{algorithm}

There are two key points to note in the algorithm. (\romannumeral1) There doesn't have to be a one-to-one mapping from a player to a feature in explained sample $x$. As is shown in Fig.{~\ref{shapley}}($b$), we treat a patch of pixels as a single player to construct baseline sets. (\romannumeral2) The construction of a baseline $d$ of coalition $S$ is by replacing the default values of $x'$ included in $S$ with corresponding values of $x$. The default value is dependent on the task which we will specify in the experiments.



There are two sources of discrepancies between our SIG and Shapley Value: (\romannumeral1) We only collect a subset of coalitions, with the omission of many other coalitions; (\romannumeral2) The marginal contribution approximated by IG is a deviated from the accurate definition of Shapley Value.

Finally, compared with Shapley Value, the benefits of using SIG are as follows: 
\begin{itemize}
    \item SIG is more suitable in a mini-batch computation setting by leveraging powerful parallel computation devices such as GPUs which is prevalent in current large scale applications.
    \item SIG has the flexibility for users to define a smaller sample size instead of collecting all coalitions.
    \item SIG supports user defined players but can still obtain feature-wise dense estimation of Shapley Values. 
\end{itemize}


To summarize, our work identifies that the calculation path of IG takes a shortcut compared to the path of Shapley Value. Consequently, we introduce SIG as a hybrid of Shapley Value and IG which approximates Shapley Value efficiently.

\section{\RomanNumeralCaps{5}. Experiments}

In this section, we aim to further validate the soundness of SIG by exploring the answers to the following questions. 
\begin{itemize}
    \item \textbf{Question 1.} Does SIG accurately approximate the proportion of Shapley Values?
    \item \textbf{Question 2.} Does SIG provide improved explanations than IG? If so, is such improvement generic to different data types or instances, and consistent across various kinds of visual input tasks? In short, we wonder how well does SIG outperform IG.
\end{itemize}

\subsection{Experiment Setup}
To answer the aforementioned two critical questions, we devise two sets of experiments. Specially, 
\begin{itemize}
    \item {\bf Simulation of Shapley Value} involves a simple GridWorld environment~\cite{sutton2018reinforcement} comprising 4 distinct sub-tasks to evaluate our SIG's ability of simulating Shapley Value for Question 1;
    \item {\bf Performance of Explanation} focuses on two visual input-based tasks: 1) Expression Code Task utilizing Deviation Learning Network (DLN) model~\cite{zhang2021learning} on a dataset from a private data source.; 2) Image Classification Task employing ResNet model~\cite{he2016deep} on ImageNet Dataset~\cite{deng2009imagenet} for Question 2.
\end{itemize}  


\paragraph{Compared Methods.} To objectively assess the performance of explanation provided by SIG, we choose three commonly used baseline methods for comparison, i.e., (\romannumeral1) {\bf random baseline}: baseline sample is randomly selected from a set of samples being explained;
(\romannumeral2) {\bf zero baseline}: baseline value of each feature is set to zero; (\romannumeral3) {\bf mean baseline}: baseline value of each feature is determined as the average value across a set of samples being explained.

\paragraph{Hyperparameters.} For \textit{Simulation of Shapley Value} experiment,  we list several hyperparameters as follows.
\begin{itemize}
    \item[$\blacktriangleright$] \textbf{Q}: represents the ratio of sampled coalitions to all coalitions;
    \item[$\blacktriangleright$] \textbf{N}: denotes the number of baselines sampled from randomly selected coalitions.
\end{itemize}
For \textit{Performance of Explanation} experiment, hyperparameters include: 
\begin{itemize}
    \item[$\blacktriangleright$] \textbf{\{M, N\}}: indicates that an $M \times N$ pixel area is treated as a new player.
\end{itemize}
For a more detailed discussion on the impact of these hyperparameters, please refer to the Appendix.

\subsection{Simulation of Shapley Value}
The simplicity of GridWorld facilitates the computation of Shapley Values, making it an ideal simulation environment to answer Question 1. In GridWorld, the agent performs valid actions (that do not result in out-of-bounds) based on its state to maximize the reward. We take time step $(s,a)$ as a player and treat reward as utility. We construct a 2 $\times$ 2 GridWorld with 12 players under 2 distinct Shapley Value conditions, and a 2 $\times$ 3 GridWorld with 20 players and 2 different Shapley Value conditions. 
The Shapley Values for these four conditions are detailed in the Appendix.


\begin{figure}[htp!]
\centering
\setlength{\abovecaptionskip}{0.cm}
\subfigure[]{
    \includegraphics[width=0.45\columnwidth]{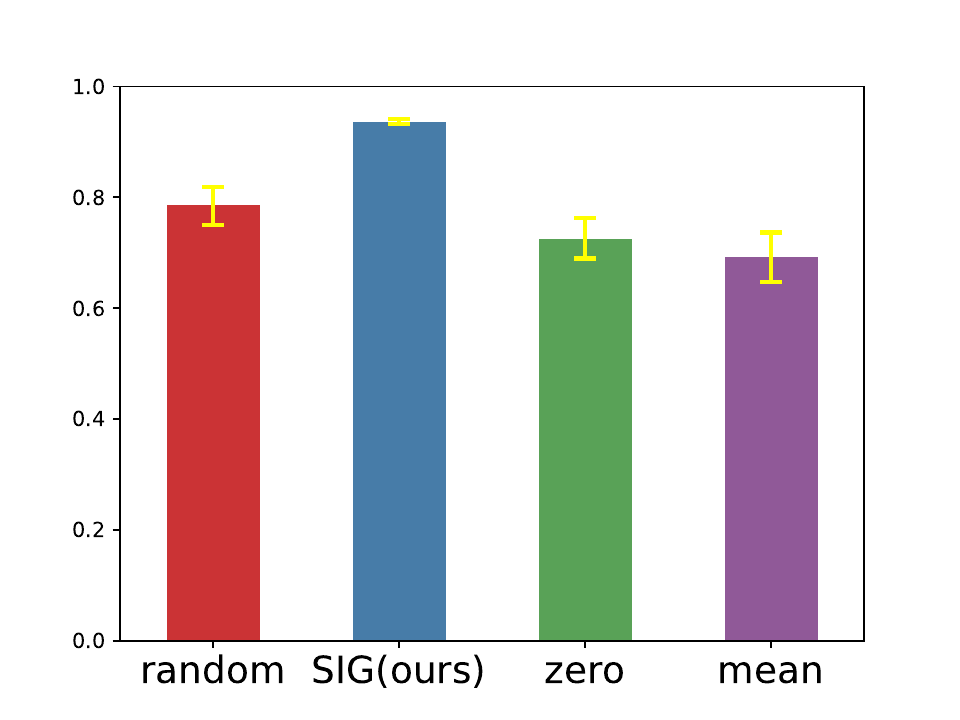}
           \label{grid:1}
           }
\subfigure[]{
    \includegraphics[width=0.45\columnwidth]{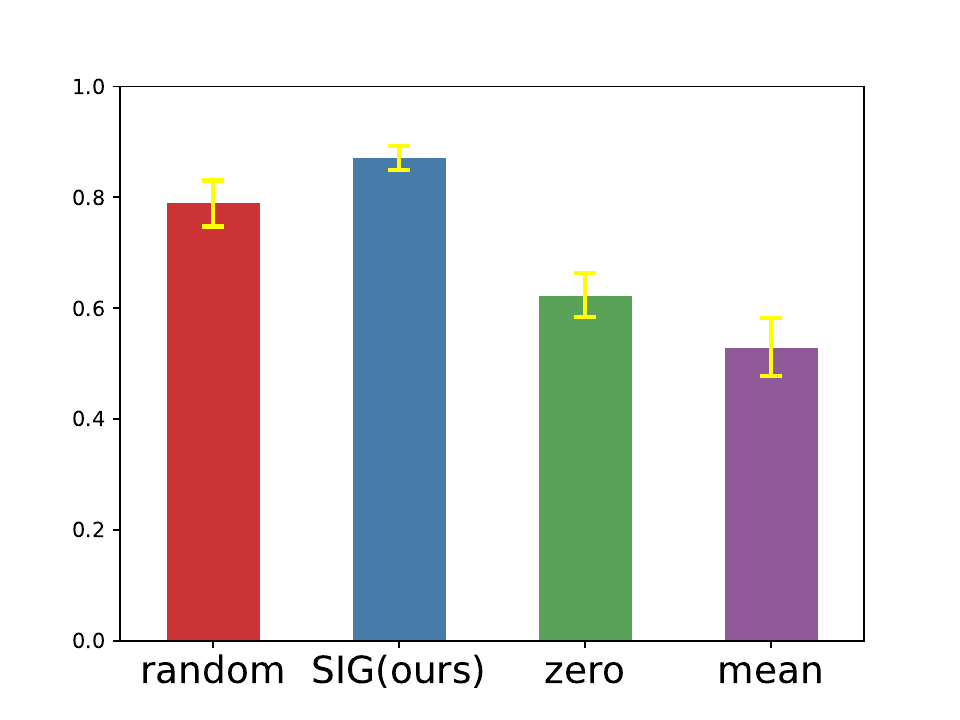}
           \label{grid:2}
           }
\subfigure[]{
    \includegraphics[width=0.45\columnwidth]{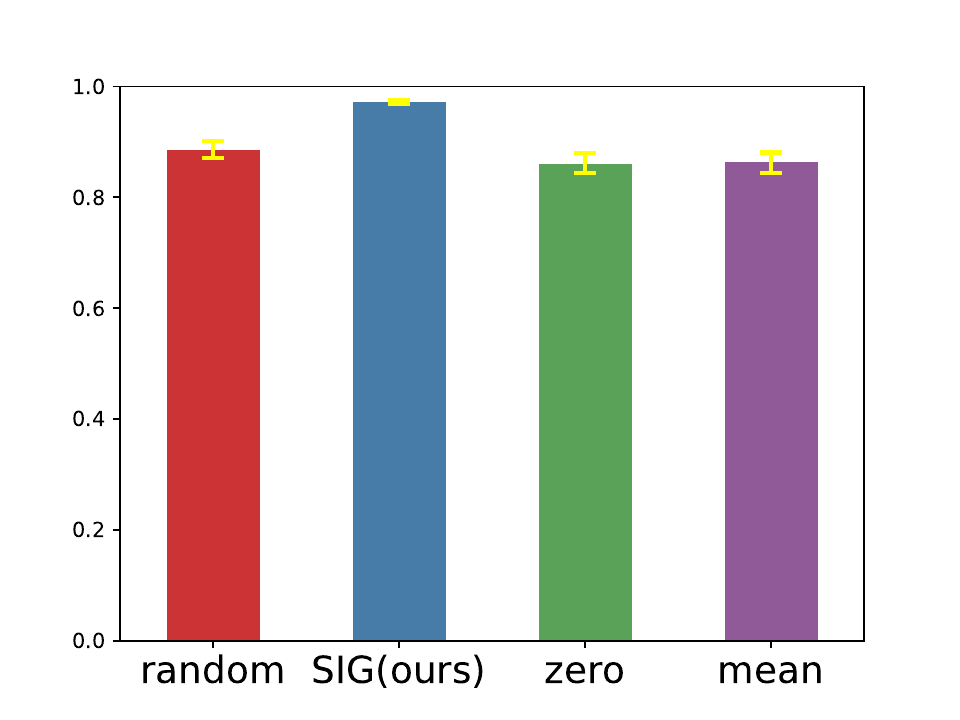}
           \label{grid:3}
           }
\subfigure[]{
    \includegraphics[width=0.45\columnwidth]{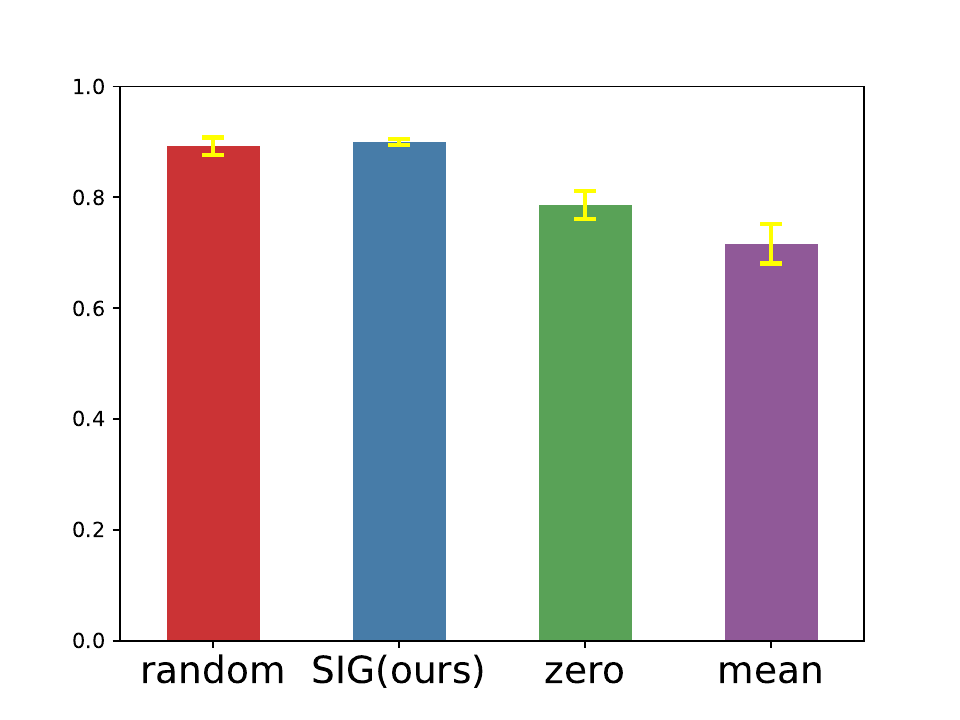}
           \label{grid:4}
           }
\caption{Average Spearmanr metric of four baseline methods in GridWorld. $(a)$ and $(b)$ depict the results for 2 $\times$ 2 GridWorld, and $(c)$ and $(d)$ are the results for 2 $\times$ 3 GridWorld. The yellow line indicates the variance of the Spearmanr metric.}
\label{gridworld}
\setlength{\abovecaptionskip}{-1pt}
\end{figure}

\begin{figure}[htp!]
\centering
\subfigure[]{
    \includegraphics[width=0.46\columnwidth]{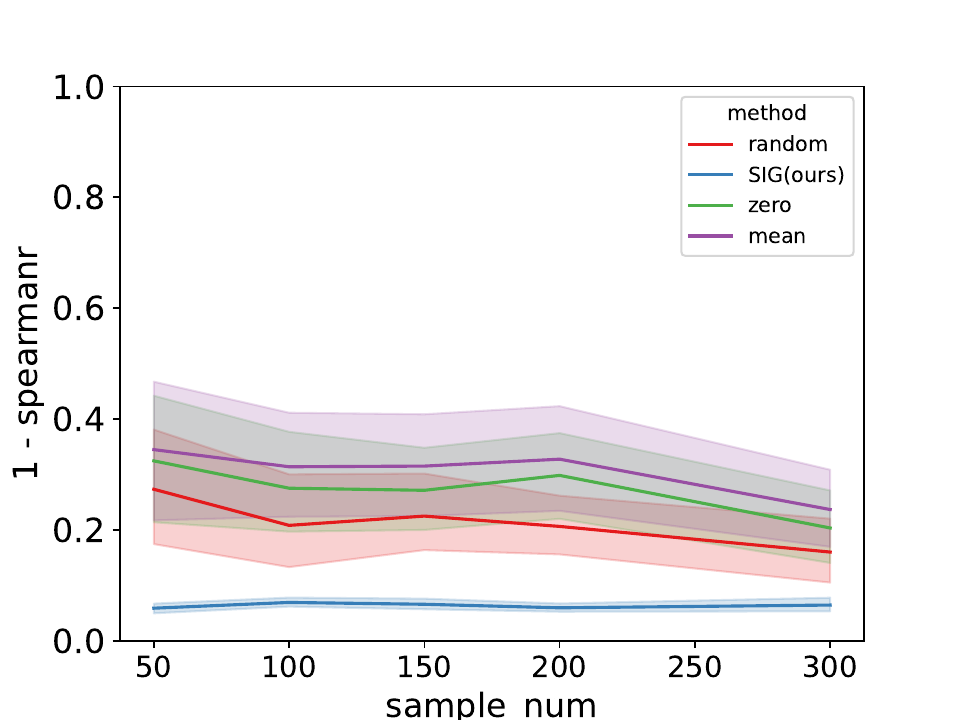}
           \label{hyper1}
           }
\subfigure[]{
    \includegraphics[width=0.46\columnwidth]{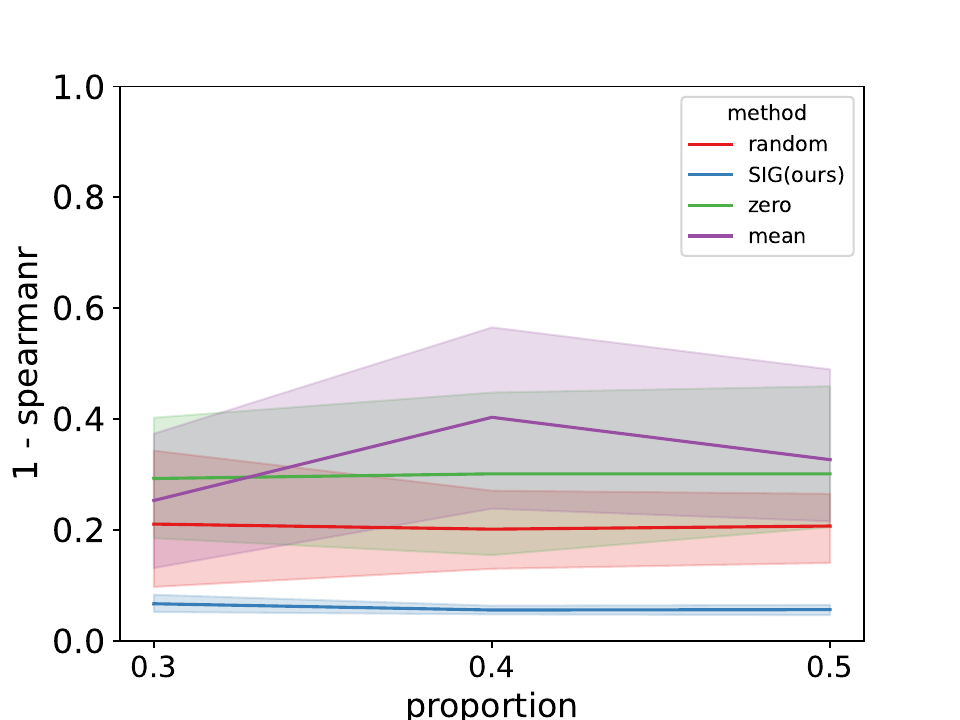}
           \label{hyper2}
           }
\caption{Average 1-Spearmanr metric of four baseline methods in GridWorld. $(a), (b)$ denote the impacts of hyterparameters $N$ with fixed $Q$ $40\%$ and $Q$ with fixed $N$ 200 respectively in a 2$\times 2$ GridWorld. (Details are documented in the Appendix)}
\label{hyper}
\end{figure}

As a quantitative measurement for the order of player's contribution, we select {\it Spearmanr metric} \cite{gauthier2001detecting} to evaluate the similarity between the computed Shapley Value order and the actual Shapley Value order. 
To investigate the robustness of SIG under varying Shapley Values, 
we test against a set of hyperparameters. Ten independent experiments were conducted to minimize the effect of randomness.
The results are shown in Fig.{~\ref{gridworld}}. 
SIG has a higher average Spearmanr metric 
than the other three baseline methods across different Shapley Value conditions.
It means that order computed by SIG is closer to the player's actual Shapley Value order. Therefore, we conclude a \textbf{positive response} to \textbf{Question 1}.
We also explore the robustness of SIG against hyperparameters. 
Fig.{~\ref{hyper}} illustrates the performance of the four methods against different hyperparameter settings in a $2 \times 2$ GridWorld. It's obvious that SIG is sensitive to neither hyperparameters $Q$ nor $N$, which shows robustness of SIG. In addition, Fig.{~\ref{gridworld}} demonstrates that the variance of SIG is lower than other three baseline methods, which is another proof of SIG's robustness.

\subsection{Performance of Explanation}
In this experiment, we seek for an answer to Question 2 by employing human intuition and quantitative metrics to evaluate algorithmic performance. Specifically, in the Expression Code Task, since the dataset provided by the private data source is not sufficient for quantitative analysis, we measure the performance of methods according to human intuition. In the Image Classification Task, we leverage both human intuition and quantitative metrics.

\begin{figure}[htp!]
 \center
\includegraphics[width=\columnwidth]{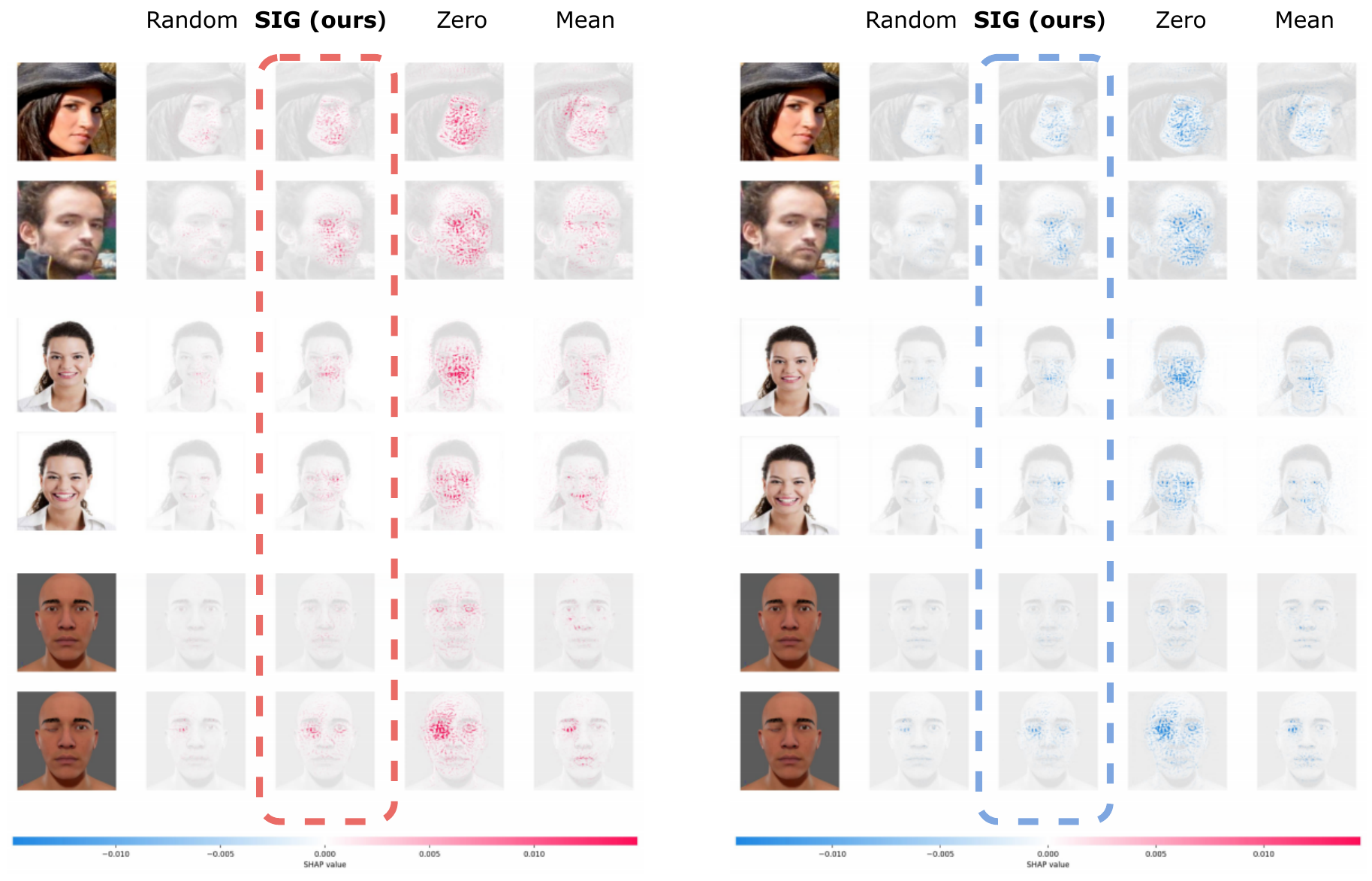}
    \caption{Saliency map of four baseline methods in Expression Code Task where red pixels indicate positive values, while blue pixels denote negative values.}
\label{expression_compare}
\end{figure} 

\paragraph{Expression Code Task} 
To reduce computational overhead, we treat an 80 $\times$ 80 pixel block as a single player and use SIG to construct the baseline set. To further clarify the explanation, we divide the pixels into positive and negative pixels. Positive pixels help to reduce the distance between two images ($\text{values} > 0$), while negative pixels help to widen this gap ($\text{values} < 0$).

As is shown in Fig.{~\ref{expression_compare}}, 
compared with the other three counterparts, SIG method is able to pay more attention to human facial features, such as eyes and lips, which is in good agreement with the first impression when human observe faces. While both zero baseline and mean baseline methods can identify these facial features, they often shift the focus to other parts of the face, even regions outside the face boundary. From human intuition perspective, SIG conveys more explainable interpretations compared to the other three baseline methods. 


 \begin{figure}[htb!]
 \center
\includegraphics[width=\columnwidth]{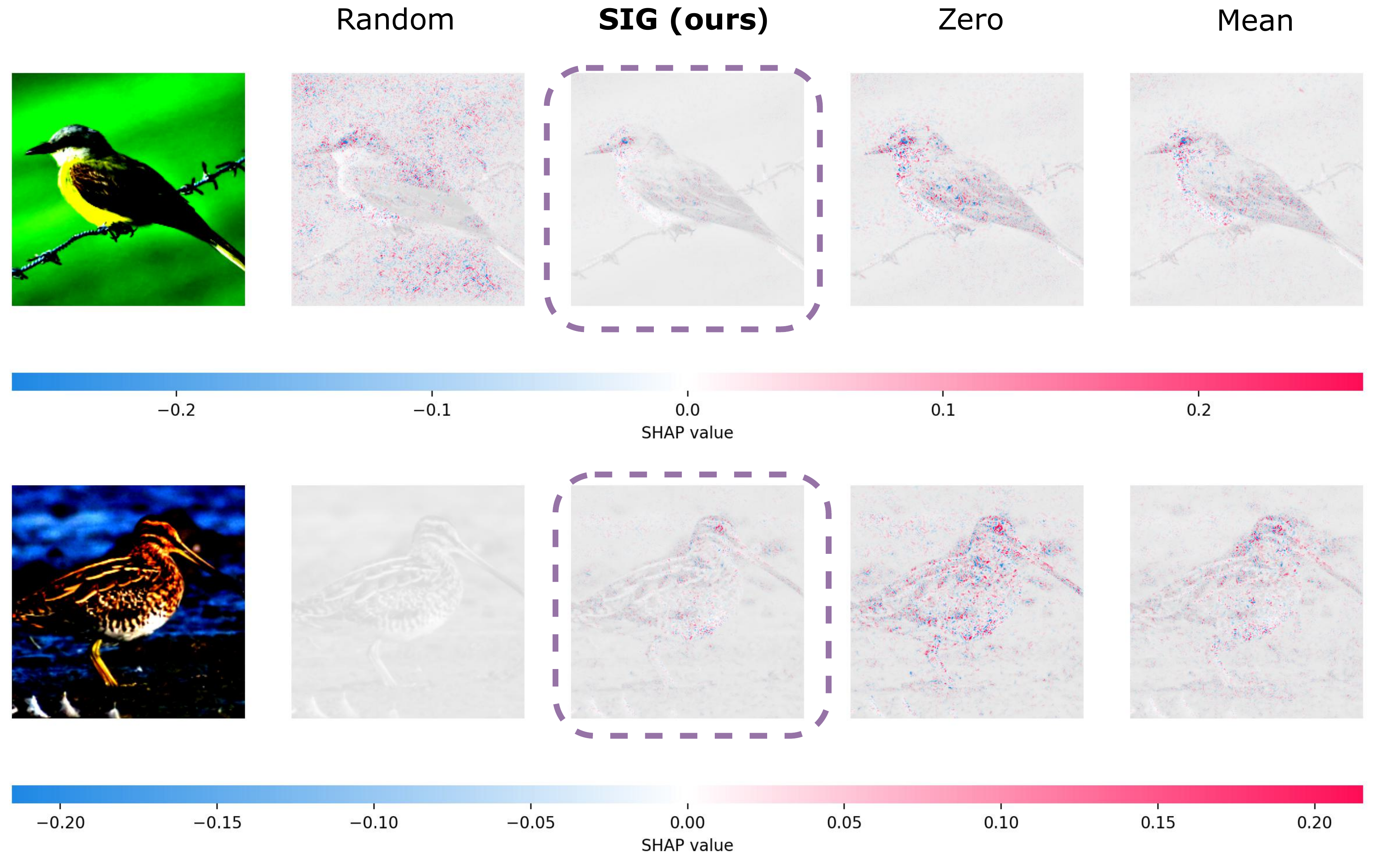}
    \caption{Saliency map of four baseline methods in Image Classification Task. SIG method predominantly focuses on bird, whereas other baseline methods divert their attention to areas outside of the bird.}
\label{imagenet_explanation}
\end{figure} 

\begin{figure}[htb!]
\centering
\includegraphics[width=0.8\columnwidth]{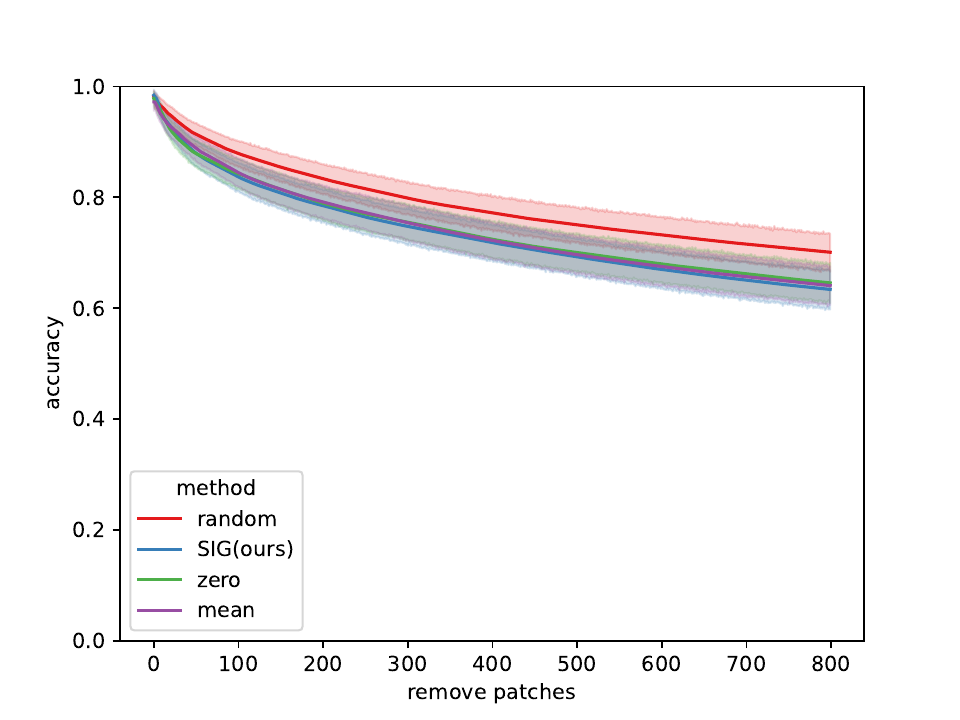} 
\caption{iAccuracy metric plots of four baseline methods in Image Classification Task where $x$-axis signifies the number of pixels removed, while $y$-axis denotes the iAccuracy metric. We present the average value derived from 500 images, along with its 0.95 confidence interval.}
\label{image_quantitative}
\end{figure}
\paragraph{Image Classification Task}
In this experiment, the baseline method generates feature contributions for each output neuron of the model (i.e., each class). We choose the class with the highest probability to reduce computational cost. The rest of the configuration is in line with that in the Expression Code Task. 

We consider {\it iAccuracy metric} adapted from \citet{li2020quantitative} as a quantitative metric to highlight the impact of features on model predictions. The iAccuracy metric specifically means that we sort the contribution scores from high to low and gradually remove pixels based on the sorted results to observe the accuracy of the model predictions. The iAccuracy metric is defined as follows.
\begin{align*}
iAcc(L)=\frac{1}{L+1}(\sum_{k=0}^{L}\mathbf{1}_{F(x^0)=F(x^k)})
\end{align*}
where $x^k$ is noted as the image $x^0$ removing the top $k$ pixels attributed by the baseline methods. We carried out experiments on 500 images. An extended discussion about more images is presented in the Appendix.
As illustrated in Fig.{~\ref{imagenet_explanation}}, SIG emphasizes the silhouette of the bird, which aligns with human instinct. While the zero and mean baselines can also pinpoint the bird's silhouette, they additionally highlight areas outside the silhouette. Furthermore, as depicted in Fig.{~\ref{image_quantitative}}, the impact of removing top features selected by our SIG is close to that of the mean and zero baselines.\footnote{\citet{molnar2020interpretable} argues that the difference of the predicted value after removing the feature may not fully depict the Shapley Value.} 
From the experimental results performed on a broad spectrum of image classification tasks, we find that SIG exhibits genericness to data types or instances, and maintains consistency in terms of explainability across a wide scope of tasks.  
Based on the intuitive image interpretation performance and quantitative analysis in the above two tasks, we can respond with a \textbf{`yes'} to \textbf{Question 2}.

Overall, by exploring the answers to Question 1 and Question 2, we experimentally verified the abilities of SIG: improved explainability, generic to data types or instances, and consistent interpretability across application domains. 

\section{\RomanNumeralCaps{6}. Conclusion and Discussion}
In this work, we rethink baseline of Integrated Gradients (IG) from the perspective of Shapley Value. We observe and theoretically analyse that IG can be viewed as Auman-Shapley Value under certain assumptions. Specifically, a set of baseline aligns with the coalitions in Shapley Value, thus tackleing the challenges of exploiting single baselines.
Therefore, we propose a novel baseline construction method SIG, which is a hybrid of the IG and Shapley Value that provides an improved, generic,  and consistent explanation compared to existing baseline methods for IG. The time complexity is mainly dominated two iterations of sampling from coalitions, leading to $\mathcal{O}(n^2)$.

Nevertheless, there remains intriguing challenges that deserve further research. For example, our current creation of players using patches of pixels method is slightly coarse \cite{ren2021towards}; the method for fitting Shapley Value proportions exhibits considerable randomness \cite{ando2020monte}; timing performance can be further enhanced \cite{chen2023algorithms}. Addressing these issues is an important goal for our ongoing and future research. 

\renewcommand*{\thesection}{\Alph{section}}
\renewcommand*{\thesubsection}{\thesection.\arabic{subsection}.}

\bibliography{aaai24}

\clearpage
\appendix
\section*{Technical Appendix}
\subsection{Details of the Theory}
\paragraph{Notations.} Table{~\ref{tab:notation}} summarizes the detailed notations used in the ``Method'' section.
\begin{table}[h]
\centering
\begin{tabular}{c|c}
\toprule
    Symbol & Meaning  \\
    \midrule
    $x$ & The sample to explain  \\
    $x'$ & The baseline sample for IG \\
    $x_i$ & The ith feature/player of $x$ \\
    $r_1$, $r_2$ & Players of $x'$ \\
    $S_1$, $S_2$ & Players of $x$ \\
    $v()$ & The utility function of SV \\
    $F()$ & The model/function to explain and evaluate \\
    $t$ & A scalar value indicates a proportion \\
    $I$ & A general complete set \\
    $N$ & The set of all players in a game \\
    $S$ & A coalition in a game \\
    $i$ & The indicator of the player of interest \\
    $V_i(S)$ & The marginal contribution of $x_i$ to $S$ \\
    $w_i(S)$ & The weight of $V_i(S)$ \\
    $k$ & A scalar value of the size of $S$ \\
    $\hat{w}(k)$ & The normalized $w(k)$ \\
    $D$ & The random sampled basline set of SIG \\
    $\mathcal{A}$ & A tensor of a mini-batch of IG results \\
    $\mathcal{V}$ & A tensor of the approximated SV result \\
\bottomrule
\end{tabular}
\caption{Notations of Method Section}\label{tab:notation}
\end{table}

\paragraph{Proof.} We prove that our proposed \textbf{proportional sampling} is an unbiased estimator of the true Shapley Value. Let $f^{PS}_{SV}(x_i)$ denote the estimated Shapley Value of $x_i$ computed leveraging proportional sampling. $p_i(S)$ is the probability of $S$ being sampled. $V_i(S)$ is the marginal contribution. $C_k$ is the value of $\binom{|N|-1}{k}$. Then $p_i(S) = \frac{1}{|N|} p_i(k) = \frac{1}{N \times C_k}$. Suppose we sampled for $M$ times $S$ based on probability function $p_i(S)$, 

\begin{align*}
\mathbb{E}(f^{PS}_{SV}(x_i)) &= \frac{1}{M}\sum p_i(S)V_i(S) \\
&= \frac{1}{M}\sum \frac{1}{|N|} \sum_{k=0}^{N-1} p_i(k)V_i(S; |S|=k) \\
&= \frac{1}{M}\sum \frac{1}{|N|} \sum_{k=0}^{N-1} \frac{1}{C_k} \sum_{j=0}^{C_k} V_i(S_j) \\
&= \frac{1}{M}\sum \sum_{S\in N/\{i\}} \frac{1}{|N| \times C_k} V_i(S) \\
&= \frac{1}{M}\sum 1 \cdot \sum_{S\in N/\{i\}} w_i(S)V_i(S) \\ 
&= \mathbb{E}(\sum_{S\in N/\{i\}} w_i(S)V_i(S)) = \mathbb{E}(f_{SV}(x_i))
\end{align*}

\paragraph{Algorithms.} Although we provided the algorithm of general Shapley Intergratd Gradients (SIG) in the main paper. We made small adjustments to it when applying it to game data and image data respectively.
\textbf{Game Data}. Sampling from all coalitions can be time-consuming, and at times, it might even seem infeasible in actual cases. Therefore, we initially randomly sample a proportion Q of coalitions. From this subset, we then sample N baselines. \textbf{Image Data}. Direct computation for each pixel is nearly unfeasible. To simplify the process, we consider an area of $M\times N$ pixels as a single player, thereby streamlining the computation of contributions.


\subsection{Experimental Design}

\paragraph{Reward settings in the GridWorld} In the GridWorld environment, two types of reward functions are utilized.  The first reward is denoted by $r_1$, which is -1 for each step taken by the agent before it reaches the end position, and +1 when it reaches the end position. The second reward function $r_2$ works as follows:
\begin{itemize}
    \item For every step taken by the agent before it reaches the termination position for the first time, a -1 reward is given.
    \item On reaching the termination position for the first time, the agent is awarded a +1 reward.
    \item Subsequent to the agent's initial arrival at the termination position:
    \begin{itemize}
        \item -0.5 reward is given for each step if the agent does not reach the termination position.
        \item +1.5 reward is granted if the agent reaches the termination position again.
    \end{itemize}
\end{itemize}
 Shapley Value computes the marginal contribution of players based on combinations of players. In contrast, our reward function addresses permutations. We define the reward function of a combination $S$ to be the maximum reward associated with any permutation $P$ within combination $S$, i.e., $v(S) = \max v(P), P\in S$. These two reward function are employed in both 2 $\times$ 2 GridWorld and 2 $\times$ 3 GrdiWorld, resulting in Shapley Value shown in  Fig.{~\ref{train_effect}}.

Since reward functions are often not compatible with Deep Neural Networks (DNN). To address this, we utilize trained deep neural networks as a substitute for the reward functions. In detail, we take coalitions with one-hot encoding as samples, and regard output of reward function as label. To evaluate performance of trained DNN model, we replace reward function with DNN in the computation of Shapley Value to assess the DNN's ability to emulate the reward function. 
We present one representative example to demonstrate that the DNN can accurately emulate the Shapley Value. Our trained DNN can seamlessly replace the utility function, producing Shapley Values that are nearly identical. Further details can be found in the accompanying code.

\begin{figure}[htp!]
 \center
\includegraphics[width=\columnwidth]{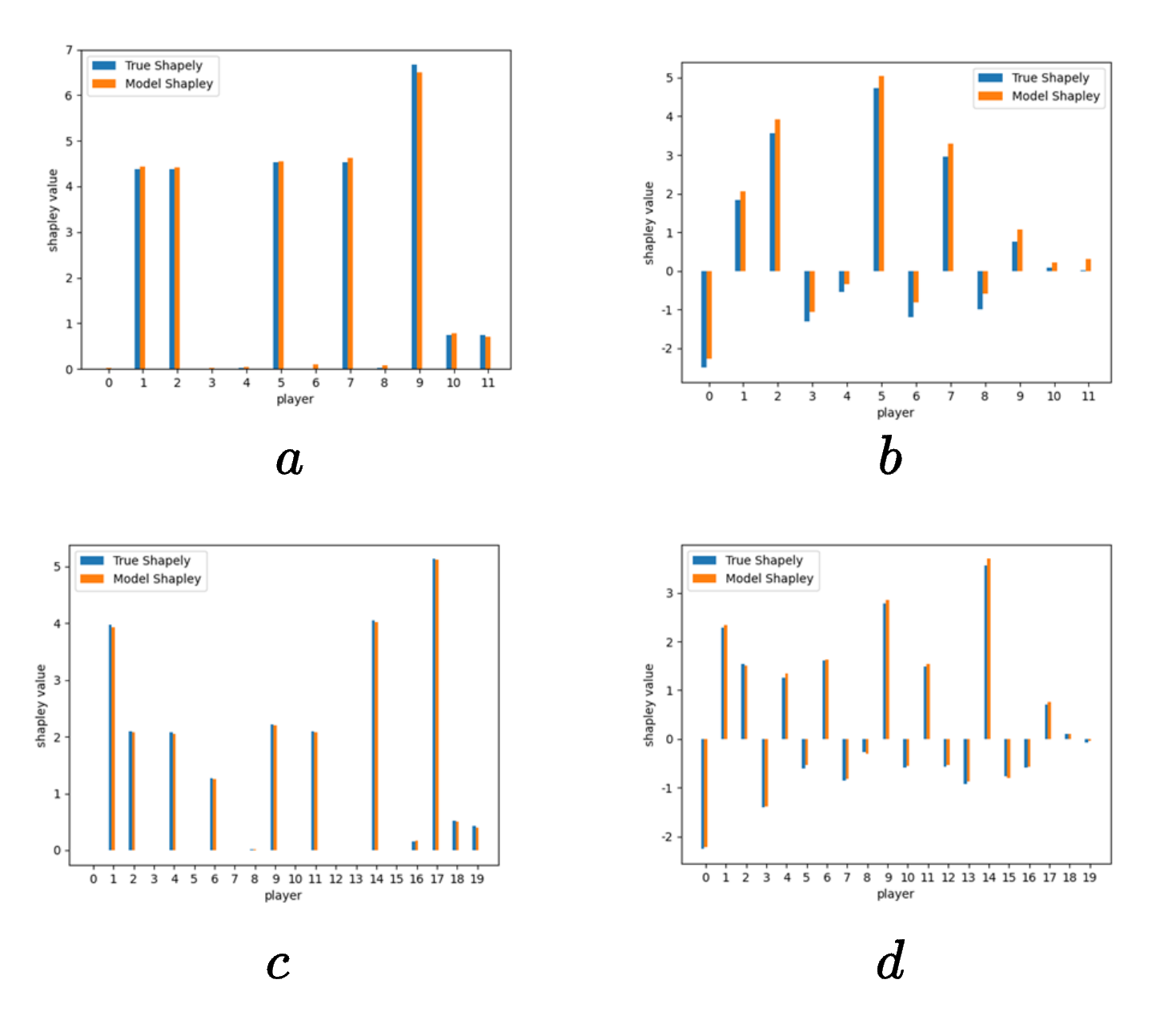}
    \caption{Evaluation of DNN. We replace reward function with DNN in the computation of Shapley Value to assess the DNN's ability to emulate the reward function. Subfigures $a$ and $b$ represent 2 $\times$ 2 GridWorld with reward function $r_1$ and $r_2$ respectively. Subfigures $c$ and $d$ represents 2 $\times$ 3 GridWorld with reward function $r_1$ and $r_2$ respectively.}
\label{train_effect}
\end{figure}

\subsection{Models}
\paragraph{GridWorld} 
For the experiments conducted in GridWorld, we employ a fully connected network that comprises one hidden layer with 64 neurons and another hidden layer with 32 neurons.

\paragraph{Expression Code Task} For the experiments conducted in Expression Code Tasks, we utilize DLN model and a private dataset.

\paragraph{Image Classification Task} For the experiments conducted in Image Classification Task, we apply Resnet model built in Pytorch and ImageNet dataset.

\subsection{Supplementary Results}
\paragraph{Running Time} In Image Classification Tasks, we further evaluate the running time of the baseline methods. As depicted in Fig.{~\ref{time_imagenet}}, the running time of our SIG closely mirrors that of the zero and mean baselines, outperforming the random baseline.  Additionally, while the variance in running time for our SIG is slightly greater than that of the zero and mean baselines initially, it reduces as the number of images increases. This can mainly be attributed to ResNet's capability to handle batch operations, which results in similar running times.
\begin{figure}[htp!]
\center
\includegraphics[width=\columnwidth]{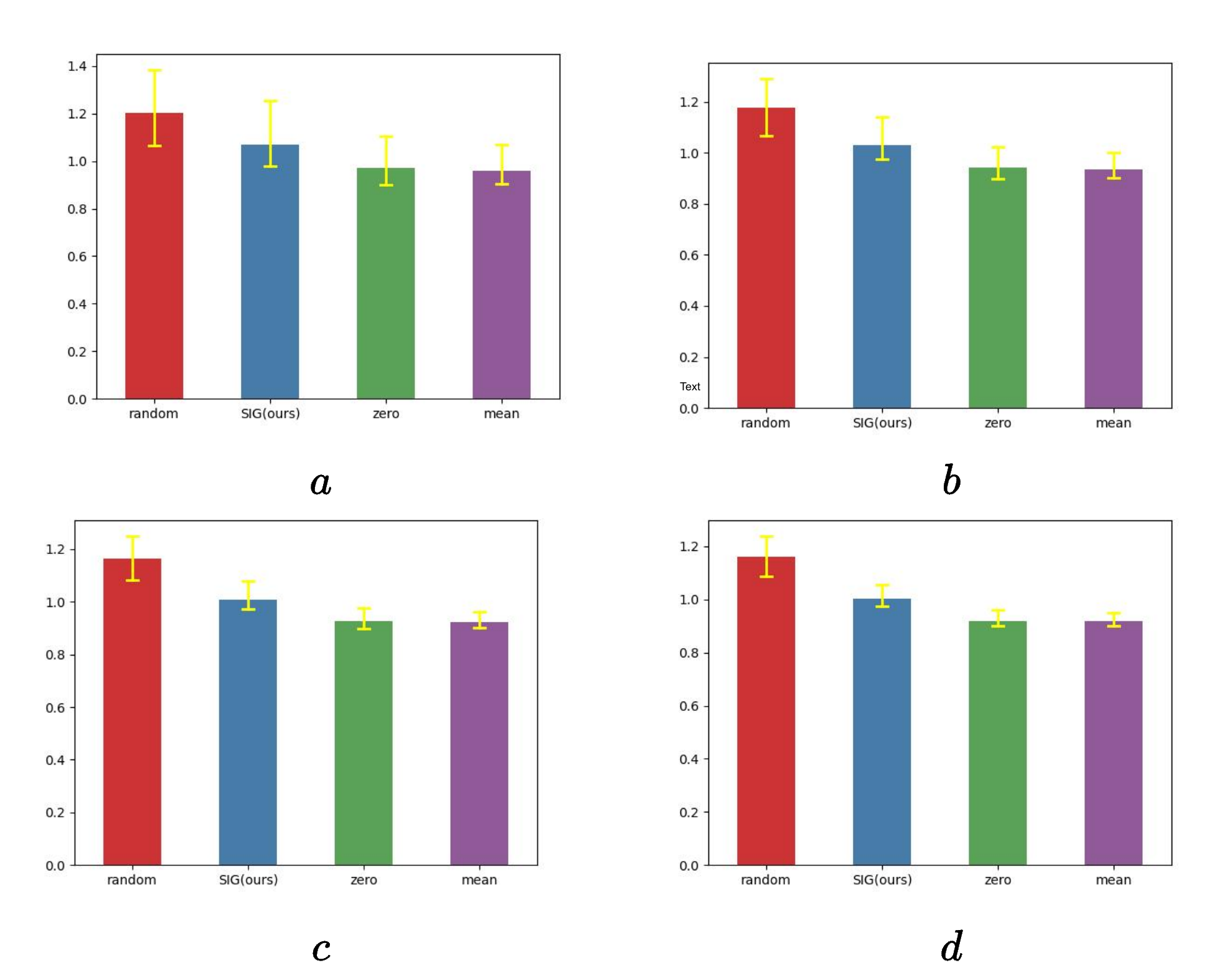}
\caption{ Running time of baseline methods with ResNet model and ImageNet dataset. Subfigures $a$, $b$, $c$ and $d$ represent running time over 300, 500, 800, 1000 images respectively.}
\label{time_imagenet}
\end{figure}

While for GridWorld, where the computation of the Shapley Value doesn't support batch operation, there's a significant increase in running time, as reflected in Table{~\ref{tab:runtime}}, our SIG's running time is notably higher than that of the other baselines since our SIG iterates baseline one by one.

\begin{table}[]
\centering
\resizebox{\columnwidth}{!}{%
\begin{tabular}{cc|c|c|c|
>{\columncolor[HTML]{ECF4FF}}c |c|c}
\hline \hline
Environment           & Reward & $Q$ & $N$ & Random & \textbf{SIG (ours)} & Zero & Mean \\ \hline
\multicolumn{1}{c|}{} & $r_1$  &     &     & 4.59s   & \textbf{55.05s}      & 4.59s & 4.59s \\ \cline{2-2} \cline{5-8} 
\multicolumn{1}{c|}{\multirow{-2}{*}{2 $\times$ 2}} & $r_2$ & \multirow{-2}{*}{40\%} & \multirow{-2}{*}{200}  & 4.59s & \textbf{55.07s} & 4.59s & 4.59s \\ \hline \hline
\multicolumn{1}{c|}{} & $r_1$  &     &     & 4.59s  & \textbf{91.79s}     & 4.57s & 4.59s \\ \cline{2-2} \cline{5-8} 
\multicolumn{1}{c|}{\multirow{-2}{*}{2 $\times$ 3}} & $r_2$ & \multirow{-2}{*}{40\%} & \multirow{-2}{*}{1500} & 4.59s & \textbf{91.77s} & 4.59s & 4.59s \\ \hline \hline
\end{tabular}%
}
\caption{Running time of each baseline method. The unit of running time is second (s). We take one example to explain that running time of our SIG is quite longer than other three baselines since GridWorld doesn't support batch operation.}
\label{tab:runtime}
\end{table}

\paragraph{GridWorld} We delve into the effects of hyperparameters across different GridWorld environments. The influence of hypermeters $N$ with fixed $Q$ $40\%$ is depicted in Fig.{~\ref{sample_num}}. Further details can be found in our codes. The outcomes indicate that our SIG displays impressive robustness across various Shapley Value cases for $N$.

\begin{figure}[htp!]
 \center
\includegraphics[width=\columnwidth]{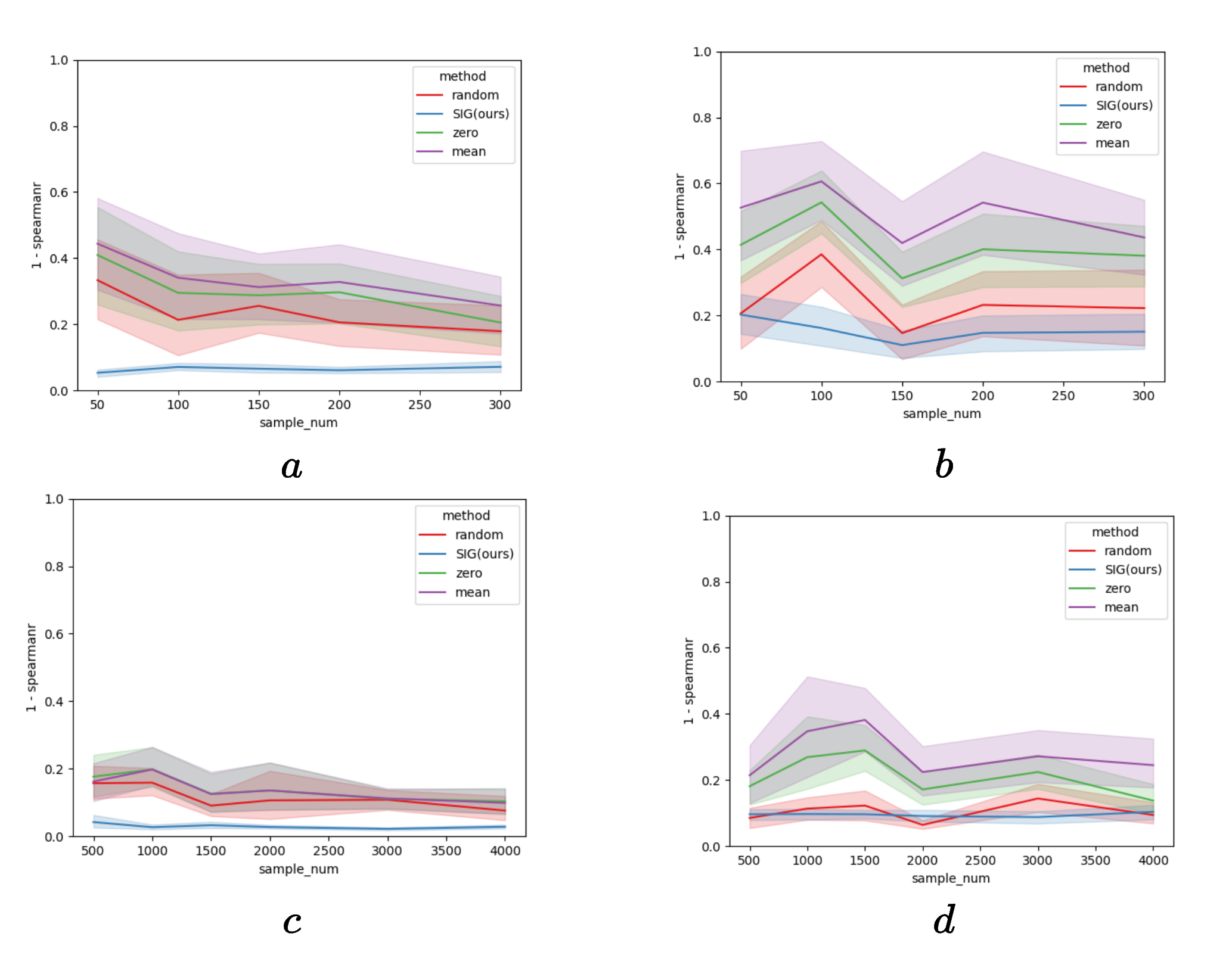}
    \caption{Average 1-Spearmanr metric for hypterparameters $N$ across four GridWorld situations with fixed $Q$ $40\%$. Subfigures $a$ and $b$ represent 2 $\times$ 2 GridWorld with reward function $r_1$ and $r_2$ respectively. Subfigures $c$ and $d$ represent 2 $\times$ 3 GridWorld with reward function $r_1$ and $r_2$ respectively.}
\label{sample_num}
\end{figure}

\begin{figure}[htb!]
 \center
\includegraphics[width=\columnwidth]{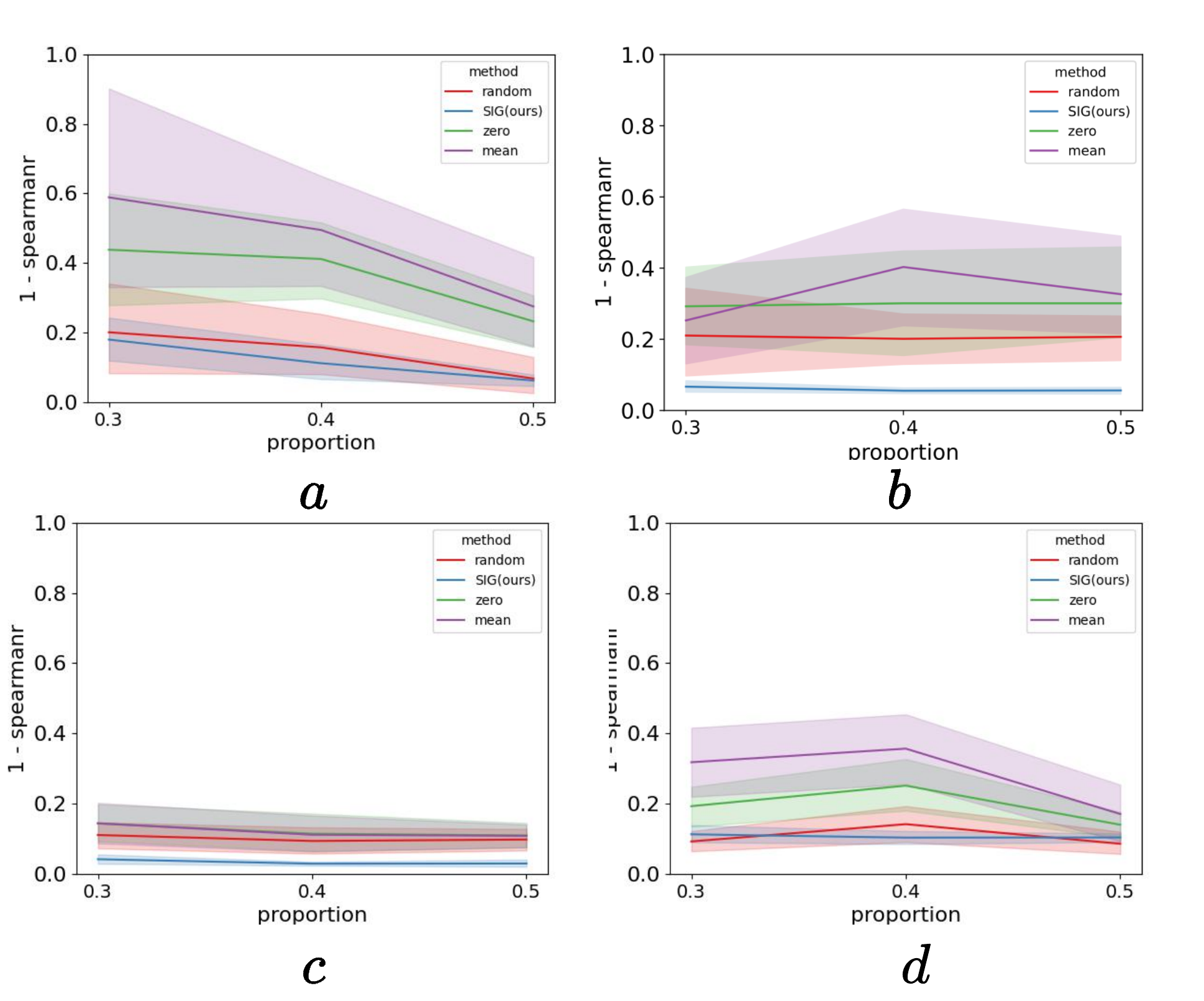}
    \caption{Average 1-Spearmanr metric for hypterparameters $Q$ across four GridWorld situations. Subfigures $a$ and $b$ represents 2 $\times$ 2 GridWorld with reward function $r_1$ and $r_2$ respectivel under fixed $N$ 200. Subfigures $c$ and $d$ represents 2 $\times$ 3 GridWorld with reward function $r_1$ and $r_2$ respectively under fixed $N$ 1500.}
\label{proportion}
\end{figure}

\begin{figure}[htb!]
 \center
\includegraphics[width=\columnwidth]{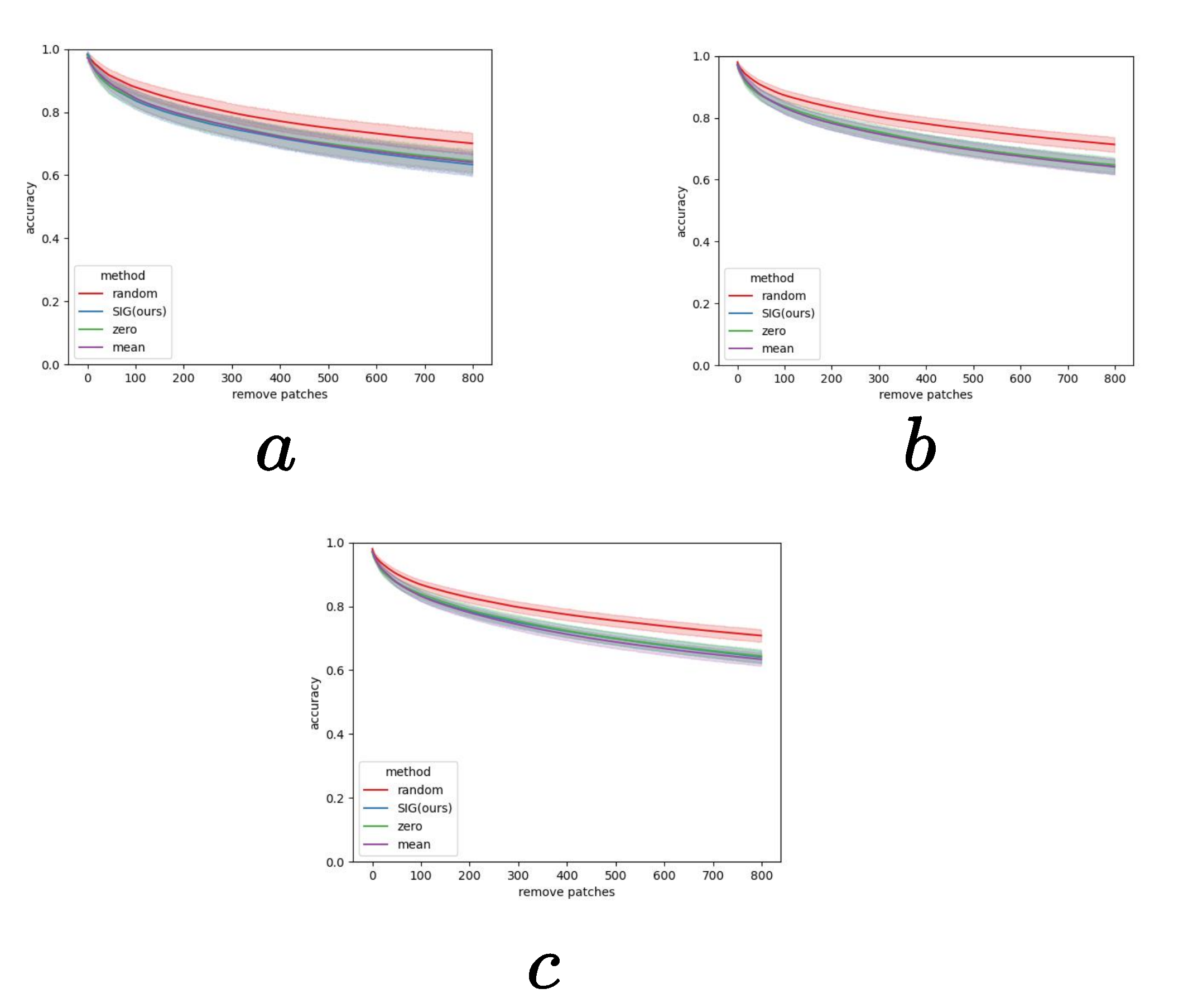}
    \caption{Accuracy metric over ImageNet for Resnet Model. $a$, $b$ and $c$ represent accuracy metrics of 500, 1000, 1500 images respectively.}
\label{images}
\end{figure}

\begin{figure}[htb!]
 \center
\includegraphics[width=\columnwidth]{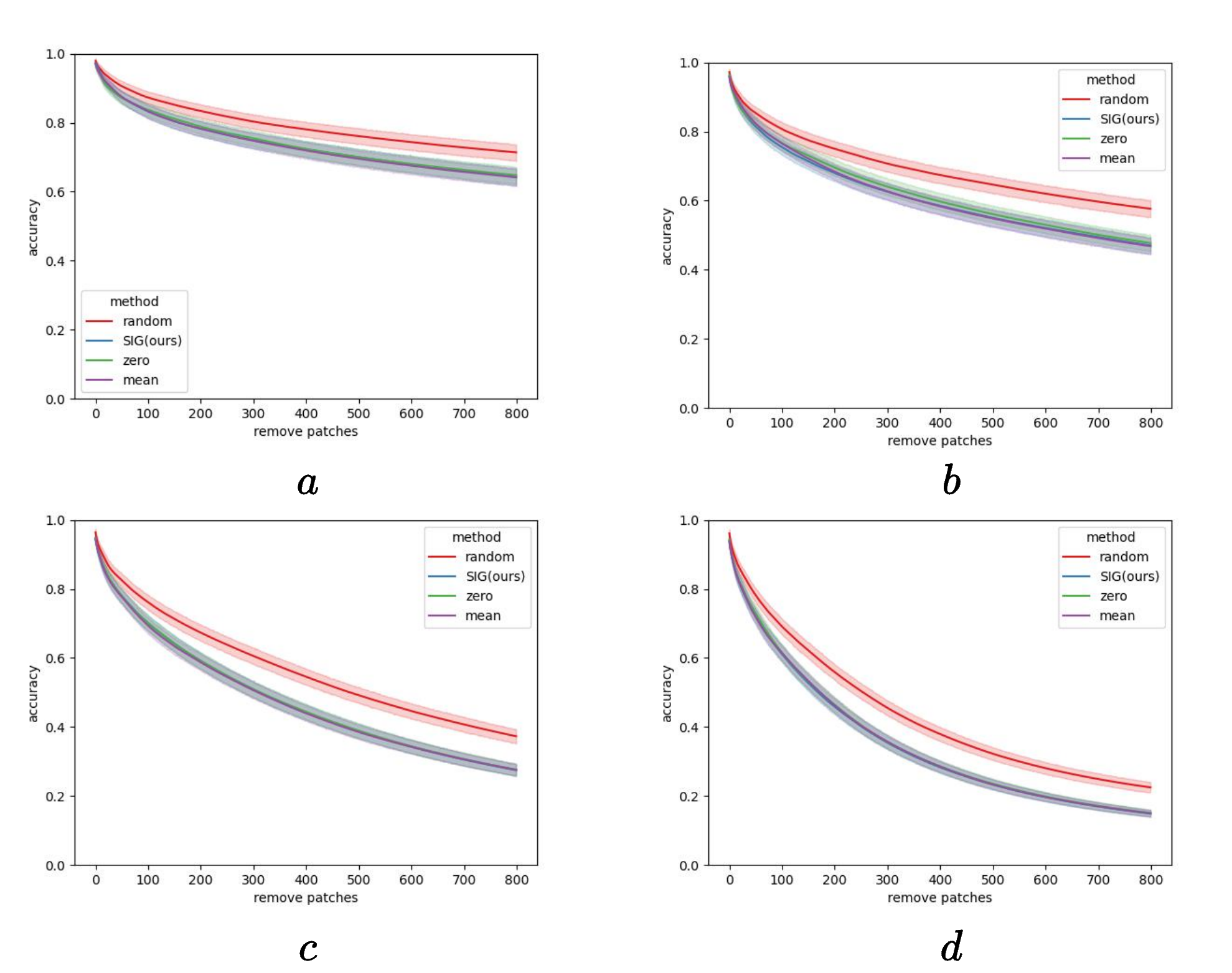}
    \caption{Accuracy metric over ImageNet for Resnet Model across various patches. Subfigures $a$, $b$, $c$, and $d$ correspond to 0, 1, 2, 3 patches under 1000 images respectively.}
\label{patches}
\end{figure}

 The impact of hypermeters $Q$ is shown in Fig.{~\ref{proportion}}. Further details can be found in code. The experiment reveals that our SIG consistently exhibits notable robustness with respect to $Q$.

\paragraph{Image Classification Task} We further explore the accuracy metric of our SIG with lots of images in the Image Classification Task. Results are presented in Fig.{~\ref{images}}. It's evident that our SIG has almost same performance to zero and mean baseline methods

Moreover, people tend to classify one image based on areas, like heads, bodies, legs, rather than individual pixels. Given this, we are curious to see if pixels around important pixels identified by our SIG align with human intuition. To assess this, we opt to remove patches rather than individual pixels. Patch $n$ is defined as areas of size $(n + 1) \times (n + 1)$ centered on important pixels. As shown in Fig.{~\ref{patches}}, we find that important pixels pinpointed by baseline methods are in alignment with our human intuition. Additionally, our SIG exhibits performance almost to both zero and mean baseline methods. For more details, please refer to the code we have provided.









\end{document}